\crefname{hypothesis}{Hypothesis}{Hypotheses}
\title{Learning Pseudo-Contractive Denoisers for Inverse Problems\thanks{Submitted to the editors DATE.
\funding{This work is supported in part by Natural Science Foundation of Shanghai (No. 22ZR1419500), Science and Technology Commission of Shanghai Municipality (No. 22DZ2229014), and the Open Project of Shanghai Key Laboratory of Magnetic Resonance, ECNU.}}}
\author{Wei Deliang\thanks{School of Mathematical Sciences, Ministry of Education Key Laboratory of Mathematics and Engineering Applications and Shanghai Key Laboratory of PMMP, East China Normal University, Shanghai 200241, China (\email{52215500006@stu.ecnu.edu.cn},\email{52265500005@stu.ecnu.edu.cn},\email{fli@math.ecnu.edu.cn}). Corresponding author: Fang Li. }
\and Chen Peng\footnotemark[2]
\and Li Fang\footnotemark[2]}
\begin{document}

\maketitle

\begin{abstract}
Deep denoisers have shown excellent performance in solving inverse problems in signal and image processing. In order to guarantee the convergence, the denoiser needs to satisfy some Lipschitz conditions like non-expansiveness. However, enforcing such constraints inevitably compromises recovery performance. This paper introduces a novel training strategy that enforces a weaker constraint on the deep denoiser called pseudo-contractiveness. By studying the spectrum of the Jacobian matrix, relationships between different denoiser assumptions are revealed. Effective algorithms based on gradient descent and Ishikawa process are derived, and further assumptions of strict pseudo-contractiveness yield efficient algorithms using half-quadratic splitting and forward-backward splitting. The proposed algorithms theoretically converge strongly to a fixed point. A training strategy based on holomorphic transformation and functional calculi is proposed to enforce the pseudo-contractive denoiser assumption. Extensive experiments demonstrate superior performance of the pseudo-contractive denoiser compared to related denoisers. The proposed methods are competitive in terms of visual effects and quantitative values.
\end{abstract}

\begin{keywords}
Inverse problems, plug-and-play methods, pseudo-contractive denoiser, Ishikawa process, spectral analysis, functional calculi, global convergence
\end{keywords}

\begin{MSCcodes}
68T07, 68U10, 68U10, 47J07, 94A08, 94A08, 90C25
\end{MSCcodes}

\section{Introduction}

Inverse problems aim to recover the potential signal from down sampled or corrupted obsevations. A typical inverse problem takes form of:
\begin{equation}\label{original inverse problem}
    f = Ku + n,
\end{equation}
where $f$ is the observed signal, $u$ is the potential signal, $K$ is the degradation operator, and $n$ is the noise following certain distributions. Different values of $K$ and $n$ correspond to different missions including denoising, deblurring, inpainting, super-resolution, and medical imaging. In order to recover $u$ from $f$, a variational approach is considered:
\begin{equation}\label{model}
    \hat{u}=\arg\min\limits_{u\in V}F(u)+ G(u;f),
\end{equation}
where $V$ is the Hilbert space, $F$ denotes the prior regularization term, and $G$ is the data fidelity term. Typical choices for $F$ include total variation \cite{sauer1992bayesian, rudin1992nonlinear} and its extensions \cite{kindermann2005deblurring,bredies2010total}, weighted nuclear norm \cite{gu2014weighted}, group-based low rank prior \cite{groupsparsity} et al.. First order methods are employed to solve (\ref{model}), such as the alternating direction method with multipliers (ADMM) \cite{boyd2011distributed}:
\begin{equation}\label{admm}
    \begin{array}{ll}
    u^{k+1}&=Prox_{\frac{F}{\beta}}(v^k-b^k),\vspace{0.5ex}\\
    v^{k+1}&=Prox_{\frac{G}{\beta}}(u^{k+1}+b^k),\vspace{0.5ex}\\
    b^{k+1}&=b^k+u^{k+1}-v^{k+1},
    \end{array}
\end{equation}
where $\beta>0$. For a given proper, closed, and convex function $F:V\rightarrow (-\infty,\infty]$, the proximal operator $Prox_F:V\rightarrow V$ is defined as:
\begin{equation}
    Prox_F(y)=\arg\min\limits_{x\in V}F(x)+\frac{1}{2}\|x-y\|^2.
\end{equation}
Noticing that $Prox_{\frac{F}{\beta}}(\cdot)$ is a Gaussian denoiser, in \cite{venkatakrishnan2013plug}, Venkatakrishnan et al. proposed to replace the $u$-subproblem in (\ref{admm}) with arbitrary Gaussian denoiser $D_\beta$ in a plug-and-play (PnP) fashion, and arrived at PnP-ADMM:
\begin{equation}
    \begin{array}{ll}
         u^{k+1}&=D_\beta(v^k-b^k), \vspace{0.5ex}\\
         v^{k+1}&=Prox_{\frac{G}{\beta}}(u^{k+1}+b^k),\vspace{0.5ex}\\
         b^{k+1}&=b^k+u^{k+1}-v^{k+1}.
    \end{array}
\end{equation}
Here, $D_\beta$ is a Gaussian denoiser with denoise strength $\beta$. When $\beta$ gets bigger, the denoise strength gets smaller. 

Interestingly, PnP-ADMM, along with other PnP methods, has demonstrated remarkable recovery effects in a diverse range of areas, such as bright field electron tomography \cite{sreehari2016plug}, camera image processing \cite{heide2014flexisp}, low-dose CT imaging \cite{venkatakrishnan2013plug,peng2023denoising}, image denoising \cite{le2023preconditioned}, deblurring \cite{laroche2023provably}, inpainting \cite{zhu2023denoising}, and super-resolution \cite{laroche2023deep}. Nevertheless, it is difficult to analyse the convergence, since $D_\beta$ is a black box. How to guarantee the convergence of PnP algorithms with weaker assumptions and more powerful denoisers has become a challenging research topic. \par

The existing approaches to guarantee the convergence of PnP methods can be classified into two categories.\par

The first class aims to find a potential function $F:V\rightarrow (-\infty,\infty]$, such that $D_\beta=\nabla F$ or $D_\beta=Prox_F$. In \cite{sreehari2016plug}, by studying the Jacobian matrix $J(x)=\nabla D_\beta(x)$, Sreehari et al. first proved that when $J$ is symmetric with eigenvalues in $[0,1]$ for any $x\in V$, there exists some proper, closed, and convex $F$, such that $D_\beta(\cdot)=Prox_{\frac{F}{\beta}}(\cdot)$ is indeed a proximal operator. However, this assumption may be too strong, that most denoisers like non-local means (NLM) \cite{buades2005review}, BM3D \cite{dabov2006image}, DnCNN \cite{zhang2017beyond}, and UNet \cite{ronneberger2015u} violate it. In \cite{sreehari2016plug}, Sreehari et al. proposed symmetric NLM and plugged it into the PnP framework, but the reconstruction seems not satisfactory.  
Romano et al. proposed the regularization by denoising (RED) method, which is more flexible than PnP-ADMM  \cite{romano2017little}. The RED prior term takes the form of $F(x) = \frac{1}{2}\langle x, x-D_\beta(x)\rangle$. Romano et al. proved that when $D_\beta$ is locally homogeneous, $\nabla D_\beta$ is symmetric with spectral radius less than one, one has $\nabla F(x)=x-D_\beta(x)$, and that PnP-GD and PnP-ADMM with RED prior converge. Yet the assumptions are impractical. As reported by Reehorst and Schniter, deep denoisers do not satisfy these assumptions \cite{reehorst2018regularization}. In \cite{cohen2021has}, instead of training a Gaussian denoiser $D_\beta$, Cohen et al. parameterized an implicit convex function $F:V\rightarrow (-\infty,\infty)$ with a neural network by enforcing non-negative weights and convex, non-decreasing activations, such that $F$ is convex, and  $D_\beta(\cdot)=\nabla F(\cdot):V\rightarrow V$ outputs clean images. By doing so, an implicit convex prior $F$ is obtained, and a convergent algorithm based on gradient decent (GD) is derived. Unfortunately, experimental results show that a convex regularization term has limited recovery effects and slow convergence. In \cite{hurault2022gradient}, Hurault et al. proposed the gradient step (GS) denoiser $D_\beta=I-\nabla F$, where $F$ is parameterized by DRUNet \cite{zhang2021plug}. In \cite{hurault2022proximal}, Hurault et al. proposed the proximal DRUNet (Prox-DRUNet), which requires that $\nabla F$ is $L$-Lipschitz with $L\le 0.5$, $F$ is bounded from below, $G$ is proper, closed, convex, and $F$ verifies Kurdyka-Lojasiewicz (KL) property \cite{attouch2010proximal,frankel2015splitting}, and that the iterations are bounded. Under these assumptions, Hurault et al. proved the convergence of PnP with half-quadratic splitting (PnP-HQS) and PnP-ADMM. Experimental results showed that it is possible to learn a gradient step denoiser while not greatly compromising the denoising performance. Nonetheless, the assumptions may still be too strong: the constraint on $L\le 0.5$ limited the denoising performance, see \cite{hurault2022proximal}.
Although the training strategy used by Hurault et al. guarantees $L\le 0.5$, it is difficult to verify the lower boundedness and KL property of $F$, as well as the boundedness of the iterations. \par

The second class investigates under what assumptions of $D_\beta$ does PnP has a fixed-point convergence. In the work of Sreehari et al. \cite{sreehari2016plug}, the Jacobian matrix $J$ of the denoiser $D_\beta$ is assumed to be symmetric, with eigenvalues lying inside $[0,1]$. Then $D_\beta$ is firmly non-expansive. As a result, PnP-ADMM converges to a fixed point. Inspired by this pioneer work, Chan et al. analyzed convergence with a bounded denoiser assumption \cite{chan2016plug}. The denoising strength decreases to ensure the convergence. In \cite{buzzard2018plug}, Buzzard et al. explained via the framework of consensus equilibrium. The convergence of PnP is proved for nonexpansive denoisers. In \cite{sun2019online}, Sun et al. analyzed the convergence of PnP with proximal gradient descent (PnP-PGM) under the assumption that $D_\beta$ is $\theta$-averaged ($\theta\in(0,1)$). The averagedness assumption is too restricted, since many denoisers cannot be considered as averaged denoiser \cite{laumont2023maximum}. In \cite{ryu2019plug}, Ryu et al. assumed the contractiveness of $I-D_\beta$. They studied the convergence of PnP-ADMM and PnP with forward-backward splitting (PnP-FBS, which is equivalent to PnP-PGM). To ensure the contractiveness of $I-D_\beta$, real spectral normalization (RealSN) was proposed, which normalized the spectral norm of each layer. However, RealSN is time consuming, and is designed specifically for denoisers with cascade residual learning structures like DnCNN, and thus is not suitable for other networks like UNet or Transformer. Besides, the contractive constraint of the residual part $I-D_\beta$ seems to limit the performance of PnP-ADMM and PnP-FBS, see \cite{ryu2019plug}. In \cite{cohen2021regularization}, Cohen et al. reformulated RED as a convex minimization problem utilizing a projection (RED-PRO) onto the fixed-point set of demi-contractive denoisers. RED-PRO gives strong link between PnP and RED framework. The denoiser in RED-PRO is assumed to be demi-contractive, locally homogeneous, with symmetric Jacobian matrix with spectral radius less than one. Similarly to RED, the assumptions are too good to be true in practice, and difficult to validate. In \cite{pesquet2021learning}, Pesquet proved the convergence of PnP-FBS when $2D_\beta-I$ is non-expansive. In this case, $D_\beta$ is a resolvent of some maximally monotone operator. They proposed an efficient training method to ensure this condition. Numerical results indicate the effectivenees of this training method. Since that the non-expansiveness of $D_\beta$ can be drawn from the non-expansiveness of $2D_\beta-I$, the constraint is more restrictive, and the performance of the denoiser is not satisfying.
\par
\textbf{Contributions.} As discussed above, in order to guarantee the convergence of PnP and RED algorithms, the previous works assume the Lipschitz properties of the denoisers. However, enforcing such assumptions inevitably comprises the denoising performance. To address these issues, in this paper, we propose convergent plug-and-play methods with pseudo contractive denoisers. Overall, our main contributions are threefold:
\begin{itemize}
    \item The assumption regarding the denoiser is pseudo-contractiveness, which is weaker than that of existing methods. To ensure this assumption, an effective training strategy has been proposed.
    \item Convergent plug-and-play Ishikawa methods based on GD, HQS, and FBS are proposed.
    \item Numerical experiments show that the proposed methods are competitive compared with other closely related methods in terms of visual effects, and quantitive values.
\end{itemize}

The rest of this paper is organized as follows. In Section \ref{sec 2} we introduce the pseudo-contractive denoisers, and study the spectrum of the Jacobian under different assumptions. In Section \ref{sec 3}, we propose the algorithms based on Ishikawa process, and analyze their convergence. In Section \ref{sec 4}, we present an effective training strategy to ensure the assumption. In Section \ref{sec 5}, we present some experimental results. Finally, we conclude the paper in Section \ref{sec 6}.

\section{Pseudo-contractive Denoisers}\label{sec 2}
In order to guarantee the convergence of PnP methods, many Lipschitz assumptions have been made on the denoiser $D$. In this section, we briefly review some closely related assumptions, and introduce the pseudo-contractive denoisers.\par
Let $V$ be the real Hilbert space, $\langle\cdot,\cdot\rangle$ be the inner product on $V$, and $\|\cdot\|$ be the induced norm. 
\begin{itemize}
    \item Non-expansive $D$:
    \begin{equation}
        \|D(x)-D(y)\|\le \|x-y\|,\forall x,y\in V.
    \end{equation}
    \item $\theta$-averaged $D$ ($\theta\in(0,1]$):
    \begin{equation}
        \left\| \left[\left(1-\frac{1}{\theta}\right)I+\frac{1}{\theta}D\right](x)-\left[\left(1-\frac{1}{\theta}\right)I+\frac{1}{\theta}D\right](y) \right\|\le \|x-y\|, \forall x,y\in V.
    \end{equation}
    \item Contractive $I-D$ ($r<1$):
    \begin{equation}
        \|(I-D)(x)-(I-D)(y)\|\le r\|x-y\|, \forall x,y\in V.
    \end{equation}
\end{itemize}
$\theta$-averaged $D$ can be written as 
\begin{equation}\label{averaged temp}
    D = \theta N + (1-\theta)I, 
\end{equation}
where $N$ is a non-expansive mapping. Averaged mappings are non-expansive. Firmly non-expansiveness is a special case of averagedness with $\theta=\frac{1}{2}$. \par
A mapping $D$ is said to be pseudo-contractive \cite{rafiq2007mann, weng1991fixed,hicks1977mann}, if there exists $k\le 1$, such that 
\begin{equation}\label{tmp 1245}
    \|D(x)-D(y)\|^2\le \|x-y\|^2+k\|(I-D)(x)-(I-D)(y)\|^2,\forall x,y\in V.
\end{equation}
When $0\le k<1$, $D$ is strictly pseudo-contractive \cite{chidume1987iterative,weng1991fixed}. Non-expansiveness is a special case of pseudo-contractive with $k=0$.\par

Lemma \ref{lemma 1} gives an equivalent definition of $k$-strictly pseudo-contractive mappings, and therefore gives the relationship between strictly pseudo-contractive mappings and the averaged mappings in the form of (\ref{averaged temp}).
\begin{lemma}\label{lemma 1}(proof in Appendix \ref{appendix lemma1})
The following two statements are equivalent:
\begin{itemize}
    \item $D:V\rightarrow V$ is $k$-strictly pseudo-contractive with $k\in[0,1)$;
    \item $D:V\rightarrow V$ can be written as
    \begin{equation}
        D=\frac{1}{1-k}N-\frac{k}{1-k}I,
    \end{equation}
    where $N:V\rightarrow V$ is non-expansive.
\end{itemize}
\end{lemma}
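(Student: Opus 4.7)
The plan is to prove the two implications together by a direct algebraic equivalence. The key observation is that the stated formula $D=\frac{1}{1-k}N-\frac{k}{1-k}I$ can be inverted to $N=(1-k)D+kI$, so the proposal will be to define $N$ this way and show that the non-expansiveness of $N$ is literally the same inequality as the $k$-strict pseudo-contractiveness of $D$, after expansion.

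First I would fix arbitrary $x,y\in V$ and, for brevity, set $u=x-y$ and $v=D(x)-D(y)$, so that $(I-D)(x)-(I-D)(y)=u-v$ and $N(x)-N(y)=(1-k)v+ku$. Then I would expand the pseudo-contractive inequality
\begin{equation*}
\|v\|^2\le \|u\|^2+k\|u-v\|^2
\end{equation*}
using $\|u-v\|^2=\|u\|^2-2\langle u,v\rangle+\|v\|^2$ and rearrange to obtain the equivalent form
\begin{equation*}
(1-k)\|v\|^2+2k\langle u,v\rangle\le (1+k)\|u\|^2.
\end{equation*}

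Next I would expand the non-expansiveness of $N$, that is $\|(1-k)v+ku\|^2\le\|u\|^2$, giving
\begin{equation*}
(1-k)^2\|v\|^2+2k(1-k)\langle u,v\rangle+k^2\|u\|^2\le \|u\|^2,
\end{equation*}
and divide through by $1-k>0$ (using $k<1$) after moving $k^2\|u\|^2$ to the right-hand side, noting $1-k^2=(1-k)(1+k)$. This yields precisely the same rearranged inequality as above. Hence the two conditions are equivalent pointwise in $(x,y)$, and since the correspondence $N\leftrightarrow D$ is an affine bijection of self-maps of $V$, the two statements of the lemma are equivalent.

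There is really no substantive obstacle: the proof is pure expansion, and the only thing to be careful about is the division by $1-k$, which is why the strict case $k<1$ is needed (the case $k=1$ would be degenerate, consistent with the separate definition of general pseudo-contractiveness in \eqref{tmp 1245}). I would keep the calculation compact by performing both expansions in terms of the common variables $u,v,\langle u,v\rangle$, so that the matching of the two inequalities is visible in a single line.
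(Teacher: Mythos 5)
Your proof is correct and follows essentially the same route as the paper's: both define $N=(1-k)D+kI$ and verify by direct norm expansion that non-expansiveness of $N$ is equivalent to $k$-strict pseudo-contractiveness of $D$ (the paper completes the square after dividing by $1-k$, while you expand both inequalities and match them, which if anything makes the two-way equivalence slightly more explicit).
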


It is worth noting that $D$ is pseudo-contractive, if and only if $I-D$ is monotone:
\begin{lemma}\label{lemma 1.5}(proof in Appendix \ref{appendix lemma1.5})
Let $D:V\rightarrow V$ be a mapping in the real Hibert space $V$. Then, $D$ is pseudo-contractive, if and only if $I-D$ is monotone, that is
\begin{equation}\label{monotone I-D}
    \langle (I-D)(x)-(I-D)(y),x-y\rangle \ge 0.
\end{equation}
\end{lemma}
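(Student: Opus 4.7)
The plan is to reduce the pseudo-contractive inequality (with the weakest constant $k=1$, which is what ``there exists $k\le 1$'' amounts to in \eqref{tmp 1245}) to the monotonicity inequality \eqref{monotone I-D} by a purely algebraic expansion of the squared norm on the right-hand side. Every step will be an equivalence, so both implications will follow in one shot.

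Concretely, I would introduce the shorthand $u = x-y$ and $v = D(x)-D(y)$, so that $(I-D)(x)-(I-D)(y) = u-v$. Then I would expand
\begin{equation*}
\|u-v\|^2 = \|u\|^2 - 2\langle u,v\rangle + \|v\|^2,
\end{equation*}
and substitute this into the pseudo-contractive inequality $\|v\|^2 \le \|u\|^2 + \|u-v\|^2$. After cancelling the $\|v\|^2$ terms on both sides, what remains simplifies to
\begin{equation*}
0 \le 2\|u\|^2 - 2\langle u,v\rangle = 2\langle u, u-v\rangle = 2\langle x-y,\,(I-D)(x)-(I-D)(y)\rangle,
\end{equation*}
which is exactly the monotonicity condition \eqref{monotone I-D}.

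Since each step is reversible, this single computation yields both directions of the equivalence. The only mildly delicate point to address in the write-up is the quantifier on $k$: to prove the ``only if'' direction I would take the largest admissible value $k=1$ (making the right-hand side of \eqref{tmp 1245} as large as possible) and observe that any smaller $k$ trivially implies the $k=1$ case; conversely, monotonicity yields exactly the $k=1$ inequality, which is the definition of pseudo-contractiveness. There is no real obstacle here—the proof is essentially a one-line polarization identity—so the main task is simply to present the equivalence cleanly and note how the quantifier on $k$ is handled.
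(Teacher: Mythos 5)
Your proof is correct and is essentially the paper's argument: the paper simply declares the lemma a "straight conclusion" of the definition \eqref{tmp 1245}, and your polarization expansion $\|u-v\|^2=\|u\|^2-2\langle u,v\rangle+\|v\|^2$ (with $k=1$, plus the observation that any $k\le 1$ reduces to the $k=1$ case) is exactly the omitted computation. Nothing further is needed.
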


We illustrate the relationships between these properties below:
\begin{equation}
    \text{Firmly Non-expansive}\Rightarrow \text{Averaged}\Rightarrow\text{Non-expansive}\Rightarrow\text{Pseudo-contractive}.
\end{equation}
An intuitive illustration is given in Fig. \ref{spectral fig}.

It has been reported in \cite{chan2016plug} that native off-of-the-shelf denoisers are not non-expansive. Besides, imposing non-expansive denoiser alters its denoising performance \cite{hurault2022proximal}. Pseudo-contractiveness enlarge the range of the denoisers in the following sense. We suppose that $D$ is a deep Gaussian denoiser, which inputs a noisy image and outputs a clean image. In this setting, $I-D$ outputs the predicted noise. Pseudo-contractive $D$ means that the difference between two output clean images is smaller than the sum of the difference between the input noisy images and the difference between the predicted noises. As a result, Pseudo-contractiveness is a weaker assumption on the deep denoisers than non-expansiveness, averagedness, and firmly non-expansiveness. \par

We further explore the potential relationships between different assumptions on the denoisers by studying the spectrum distribution.
Let $D\in \mathcal{C}^1[V]$, and $J(x) = \nabla_x D$ be the Jacobian matrix at point $x\in V$ of $D$. By the mean value theorem, (\ref{monotone I-D}) can be rewritten as
\begin{equation}
    \langle (I-J^\mathrm{T}(\xi))(x-y),x-y\rangle \ge 0, \xi = \xi(x,y)\in V.
\end{equation}
Thus $D$ is pseudo-contractive, if there holds 
\begin{equation}\label{reg PC}
    \langle (I-J^\mathrm{T})(x-y),x-y\rangle \ge 0,
\end{equation}
for any $x,y,\xi\in V$, $J=J(\xi)$. We refer (\ref{reg PC}) to \emph{the regularity condition} of pseudo-contractiveness. We decompose $J$ by its symmetric part $S=\frac{1}{2}(J+J^\mathrm{T})$ and its anti-symmetric part $A=\frac{1}{2}(J-J^\mathrm{T})$. For any $x,y\in V$, we have 
\begin{equation}
    \langle (I-J^\mathrm{T})(x-y),x-y\rangle
    = \langle (I-S)(x-y),x-y \rangle + \langle A^\mathrm{T}(x-y),x-y\rangle
    =\langle (I-S)(x-y),x-y \rangle.
\end{equation}

As a result, condition (\ref{reg PC}) is equivalent to that any eigenvalue of $S$ is not larger than $1$. Thus, the real part of any eigenvalue of $J$ is smaller than $1$. That is, the eigenvalue of $J$ for pseudo-contractive $D$ lies inside the half plane, $\sigma(J)\subset\{z\in \mathbb{C}: real(z)\le 1\}$, where $\sigma(J)$ denotes the spectrum set of $J$ as follows:
\begin{equation}\label{sigma J}
\sigma(J)=\mathop{\bigcup}\limits_{x\in V}\sigma(J(x)).
\end{equation}

Let $\|\cdot\|_*$ denote the spectral norm. Similarly, we give the \textit{regularity conditions} for the Jacobian $J$ under different assumptions on the denoiser $D$, as well as the distribution regions $\sigma(J)$ in (\ref{111})-(\ref{rc pc}). Note that these regularity conditions are sufficient conditions for a denoiser to satisfy the corresponding assumptions.
\begin{itemize}
    \item Non-expansive $D$:
    \begin{equation}\label{111}
        \|J\|_*\le 1, \ \sigma(J)\subset\{z\in \mathbb{C}: |z|\le 1\}.
    \end{equation}
    \item $\theta$-averaged $D$ ($\theta\in(0,1]$):
    \begin{equation}
        \left\| \left[\left(1-\frac{1}{\theta}\right)I+\frac{1}{\theta}J\right]\right\|_*\le 1, \ \sigma(J)\subset\left\{z\in\mathbb{C}: \left|1-\frac{1}{\theta}+\frac{1}{\theta}z\right|\le 1\right\}.
    \end{equation}
    \item Contractive $I-D$ ($r<1$):
    \begin{equation}
        \|I-J\|_*\le 1, \ \sigma(J)\subset\{z\in\mathbb{C}:|z-1|\le r \}.
    \end{equation}
    \item $k$-strictly pseudo-contractive $D$ ($k<1$):
    \begin{equation}\label{rc spc}
        \|kI+(1-k)J\|_*\le 1, \ \sigma(J)\subset\{z\in\mathbb{C}:|(1-k)z+k|\le 1 \}.
    \end{equation}
    \item Pseudo-contractive $D$:
    \begin{equation}\label{rc pc}
        \langle (I-J^\mathrm{T})(x-y),x-y\rangle \ge 0, \ \sigma(J)\subset\{z\in\mathbb{C}:real(z)\le 1 \}.
    \end{equation}
\end{itemize}

\begin{figure}[htbp]
  \centering
  \includegraphics[width=10cm]{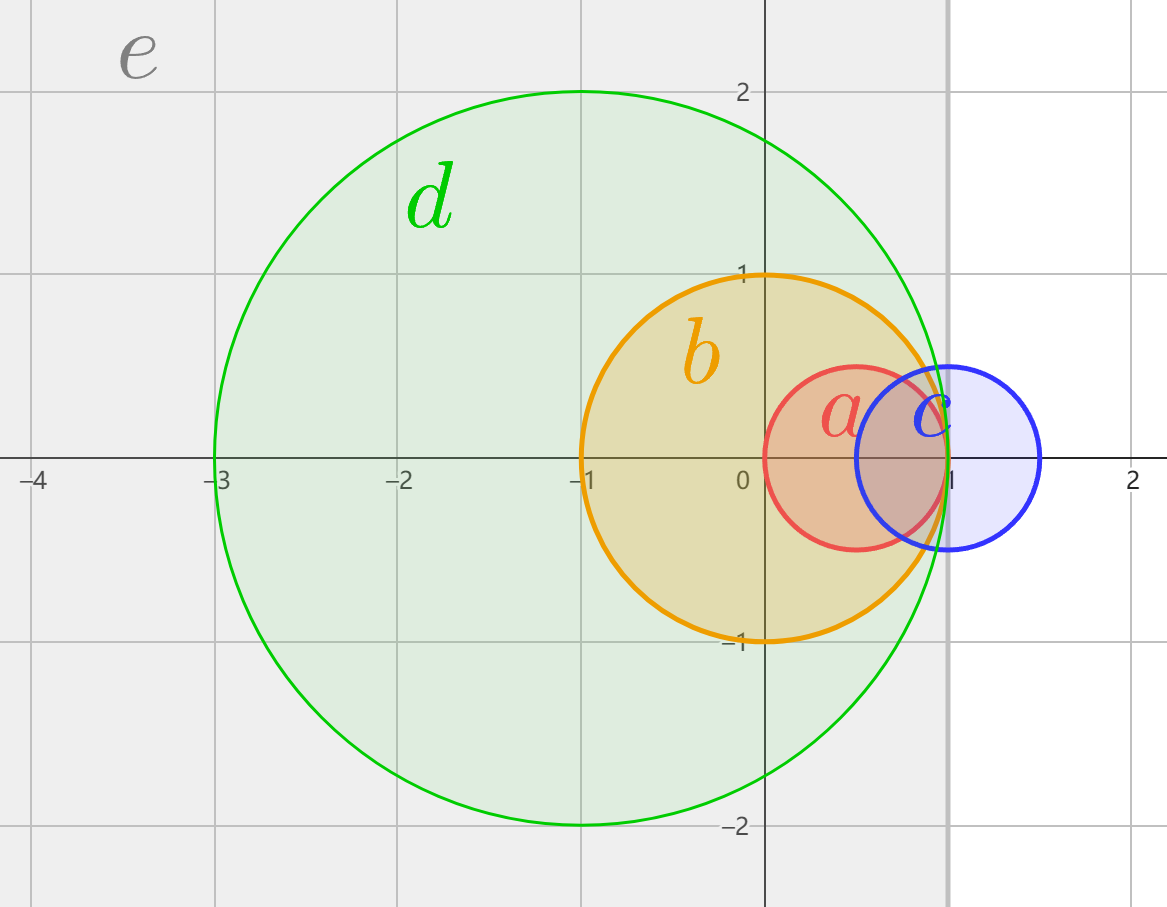}
  \caption{Spectrum distributions on the complex plane for the Jacobian under different assumptions. (a) Firmly non-expansiveness, that is $\frac{1}{2}$-averagedness, $\sigma(J)\subset\{z\in\mathbb{C}: |2z-1|\le 1\}$; (b) Non-expansiveness, $\sigma(J)\subset\{z\in\mathbb{C}: |z|\le 1\}$; (c) Contractiveness of $I-J$ with $r=\frac{1}{2}$, $\sigma(J)\subset\{z\in\mathbb{C}:|z-1|\le\frac{1}{2} \}$; (d) $k$-strictly pseudo-contractiveness with $k=\frac{1}{2}$, $\sigma(J)\subset\{z\in\mathbb{C}:|z+1|\le 2\}$; (e) Pseudo-contractiveness, $\sigma(J)\subset\{z\in\mathbb{C}:real(z)\le 1\}$.
}\label{spectral fig}
\end{figure}

In Fig. \ref{spectral fig}, we present an intuitive illustration of the relationships between different assumptions on $D$. We display the regions of spectrum distribution on the complex plane. In (a), we depict the region when $D$ is $\frac{1}{2}$-averaged, which corresponds to firm non-expansiveness. (a) is contained within panel (b), representing the unit disk when $D$ is non-expansive. This reveals that firm non-expansiveness is a specific instance of non-expansiveness. (c) showcases the region where $I-D$ is contractive with $r=\frac{1}{2}$. In Figs. \ref{spectral fig} (d) and (e), we plot the distribution region for $\frac{1}{2}$-strictly pseudo-contractive and pseudo-contractive $D$, respectively. The are in (d) encompasses non-expansiveness and is enclosed by the half-plane in (e). This suggests that the (strictly) pseudo-contractive property constitutes a significantly weaker assumption for denoisers.

\section{The proposed algorithms}\label{sec 3}
In this section, we propose PnP-based algorithms to solve (\ref{model}) using (strictly) pseudo-contractive denoisers. Before that, we briefly review three existing PnP methods. \par

Gradient descent (GD) method solves (\ref{model}) by 
\begin{equation}
    u^{n+1} = u^n - \alpha (\nabla F+\nabla G)u^n,
\end{equation}
where $\alpha>0$ is the step size. When $F$ is parameterized by a neural network $D_\beta$, $\nabla F$ is often replaced by $ I-D_\beta$ \cite{romano2017little}. Then, PnP-GD takes the form of 
\begin{equation}\label{PnP-GD}
     u^{n+1} = [(1-\alpha)I+\alpha T]u^n , T=D_\beta-\nabla G.\vspace{-1ex}
\end{equation}

Unlike PnP-GD, many PnP methods can be written as the composition of two mappings. For example, the iterations of PnP-HQS and PnP-FBS to solve (\ref{model}) takes the form of 
\begin{equation}\label{PnP-HQS}
\begin{array}{ll}
\text{PnP-HQS:}     & u^{n+1} = T(u^n), \ T = D_\beta\circ Prox_{\frac{G}{\beta}},  \\
  \text{PnP-FBS:}   & u^{n+1} = T(u^n), \ T = D_\beta\circ(I-\lambda \nabla G).
\end{array}
\end{equation}

$T$ in PnP-GD is the sum of $D_\beta$ and $-\nabla G$, while $T$ in PnP-HQS and PnP-FBS is composed of two mappings. When $D_\beta$ is assumed to be pseudo-contractive, it is necessary to study the property of $T$ in these cases. Lemma \ref{lemma 6} gives the condition that the sum of $D_\beta$ and $-\nabla G$ is pseudo-contractive.
\begin{lemma}\label{lemma 6}(proof in Appendix \ref{appendix lemma6})
Let $D$ be pseudo-contractive, and $G$ be proper, closed, and convex. Then $T=D-\nabla G$ is also pseudo-contractive.
\end{lemma}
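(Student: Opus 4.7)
The natural route is to translate pseudo-contractiveness into the monotonicity language supplied by Lemma \ref{lemma 1.5}, because monotonicity of operators behaves well under addition whereas the quadratic inequality \eqref{tmp 1245} does not combine so cleanly when mixing $D$ with $\nabla G$. So the plan is: reduce both hypotheses to monotonicity statements, add them, and translate back.

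Concretely, I would first apply Lemma \ref{lemma 1.5} to $D$: pseudo-contractiveness of $D$ is equivalent to monotonicity of $I-D$, i.e.\
\begin{equation}
\langle (I-D)(x)-(I-D)(y),\,x-y\rangle \ge 0 \quad \forall x,y\in V.
\end{equation}
Next I would recall the standard fact from convex analysis that the gradient of a proper, closed, convex function is a monotone operator on its domain of differentiability; this follows immediately from the subgradient inequality applied symmetrically at $x$ and $y$. Writing $I - T = (I-D) + \nabla G$ and using that the sum of two monotone operators is monotone (direct from the definition by adding the two inner-product inequalities), I obtain
\begin{equation}
\langle (I-T)(x)-(I-T)(y),\,x-y\rangle \ge 0 \quad \forall x,y\in V.
\end{equation}
A second invocation of Lemma \ref{lemma 1.5}, now in the reverse direction, then gives that $T$ is pseudo-contractive.

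There is no real obstacle here; the only thing to be a bit careful about is the domain issue for $\nabla G$ (the statement implicitly assumes $G$ is smooth enough that $\nabla G$ is defined where $T$ is evaluated, which is consistent with the PnP-GD setup in \eqref{PnP-GD} where this lemma will be applied). Everything else is a one-line chain through Lemma \ref{lemma 1.5} and the monotonicity of gradients of convex functions, so the proof should be short.
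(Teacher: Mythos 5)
Your proof is correct and follows essentially the same route as the paper: both decompose $I-T=(I-D)+\nabla G$, invoke Lemma \ref{lemma 1.5} to pass between pseudo-contractiveness and monotonicity, and use monotonicity of $\nabla G$ for convex $G$ together with additivity of monotonicity. The remark about the implicit differentiability of $G$ is a reasonable observation but not a departure from the paper's argument.
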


Lemma \ref{lemma 4} gives the condition that a strictly pseudo-contractive mapping composed with an averaged mapping is still pseudo-contractive. 

\begin{lemma}\label{lemma 4}(proof in Appendix \ref{appendix lemma4})
Let $D$ be $k$-strictly pseudo-contractive, and $P$ be $\theta$-averaged, $k,\theta\in(0,1].$ If $k< 1-\theta$, the composite operator $D\circ P$ is $l$-strictly pseudo-contractive, where 
\begin{equation}
    0\le l = \frac{k(1-\theta)}{(1-\theta)-k\theta}<1.
\end{equation}
If $k=1-\theta$, $D\circ P$ is pseudo-contractive. Besides, when $k<1$, $D\circ P$ is $\frac{1+k}{1-k}$-Lipschitz.
\end{lemma}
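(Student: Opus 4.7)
The overall plan is to translate both the strict pseudo-contractiveness of $D$ and the $\theta$-averagedness of $P$ into quadratic inequalities on the differences $(x-y)$, $(P(x)-P(y))$, and $(D(P(x))-D(P(y)))$, chain them, and reduce the target bound to a positive semidefiniteness condition on a $2\times 2$ matrix that yields the sharp constant $l$.

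First, using Lemma \ref{lemma 1}, I would write $D = \tfrac{1}{1-k}N - \tfrac{k}{1-k}I$ with $N$ non-expansive, which is equivalent to the defining inequality $\|D(x)-D(y)\|^2 \le \|x-y\|^2 + k\|(I-D)(x)-(I-D)(y)\|^2$. For $\theta$-averaged $P = \theta M + (1-\theta)I$ with $M$ non-expansive, I would derive the parallel estimate $\|P(x)-P(y)\|^2 \le \|x-y\|^2 - \tfrac{1-\theta}{\theta}\|(I-P)(x)-(I-P)(y)\|^2$ by expanding both sides and invoking only $\|M(x)-M(y)\| \le \|x-y\|$. Then, for arbitrary $x,y\in V$, I set $a=x-y$, $b=P(x)-P(y)$, $c=D(P(x))-D(P(y))$; combining the two quadratic inequalities gives
\[
\|c\|^2 \le \|a\|^2 - \tfrac{1-\theta}{\theta}\|a-b\|^2 + k\|b-c\|^2.
\]
The target bound $\|c\|^2 \le \|a\|^2 + l\|a-c\|^2$ then reduces to showing $k\|b-c\|^2 - \tfrac{1-\theta}{\theta}\|a-b\|^2 \le l\|a-c\|^2$. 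Substituting $p=a-b$ and $q=b-c$ so that $a-c=p+q$, this is equivalent to the quadratic form inequality
\[
\bigl(l + \tfrac{1-\theta}{\theta}\bigr)\|p\|^2 + 2l\langle p,q\rangle + (l-k)\|q\|^2 \ge 0 \quad \text{for all } p,q\in V.
\]

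Since the triple $(\|p\|^2,\|q\|^2,\langle p,q\rangle)$ attains any values compatible with Cauchy-Schwarz, this holds for all $p,q$ if and only if the associated symmetric $2\times 2$ matrix is positive semidefinite, which after clearing the trivial diagonal conditions boils down to $(l + \tfrac{1-\theta}{\theta})(l-k) \ge l^2$. Solving for the smallest admissible $l$ yields exactly $l = \tfrac{k(1-\theta)}{(1-\theta)-k\theta}$; an elementary check then gives $l<1 \Leftrightarrow k<1-\theta$ and $l=1 \Leftrightarrow k=1-\theta$, recovering both cases of the lemma. For the Lipschitz bound, the triangle inequality applied to $D = \tfrac{1}{1-k}N - \tfrac{k}{1-k}I$ yields $\|D(x)-D(y)\| \le \tfrac{1+k}{1-k}\|x-y\|$, and since any $\theta$-averaged mapping is non-expansive, composing with $P$ preserves the same constant for $D\circ P$. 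The main obstacle I anticipate is the PSD reduction step: one must be careful to confirm that no hidden constraint on the admissible triples $(\|p\|,\|q\|,\langle p,q\rangle)$ is imposed by the operator composition, so that the two-variable quadratic form truly has full freedom and the resulting $l$ is sharp rather than a loose overestimate.
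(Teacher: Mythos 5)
Your proposal is correct and follows essentially the same route as the paper: both chain the $\theta$-averagedness inequality for $P$ with the strict pseudo-contractiveness inequality for $D$, set $p=(I-P)(x)-(I-P)(y)$ and $q=(I-D)(P(x))-(I-D)(P(y))$ so that $p+q=(I-D\circ P)(x)-(I-D\circ P)(y)$, and reduce to $k\|q\|^2-\frac{1-\theta}{\theta}\|p\|^2\le l\|p+q\|^2$, which the paper verifies via the exact norm identity of Lemma \ref{lemma 3} and the sign of $k-\frac{1-\theta}{\theta}$ while you verify it by the discriminant condition $\bigl(l+\frac{1-\theta}{\theta}\bigr)(l-k)\ge l^2$ of the associated quadratic form---an equivalent elementary computation that yields the same $l$, with the Lipschitz claim handled identically through Lemma \ref{lemma 1} and non-expansiveness of $P$. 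Your closing worry about attainability of the triples $(\|p\|,\|q\|,\langle p,q\rangle)$ is immaterial for the lemma, since only the sufficiency direction of the positive-semidefiniteness reduction is needed (sharpness of $l$ is not claimed).
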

By Lemmas \ref{lemma 6}-\ref{lemma 4}, when $D_\beta$ is (strictly) pseudo-contractive, $T$ is pseudo-contractive. We need a special iteration schemes to find the fixed point of a pseudo-contractive mapping $T$. 

Ishikawa proposed the following process to find the fixed point of a Lipschitz pseudo-contractive mapping $T$ over a compact convex set $K$ \cite{ishikawa1974fixed}. He proved the following theorem.

\begin{theorem}\label{thm 0.5}
Let $K$ be a compact convex subset of a Hilbert space $V$, $T:K\rightarrow K$ is a Lipschitz and pseudo-contractive mapping, and $x^0\in K$, then the sequence $\{x^n\}$ converges strongly to a fixed point of $T$, where $x^n$ is defined iteratively for $n\ge 0$ by:
\begin{equation}\label{ishkawa process}
    \begin{array}{ll}
        y^n &= (1-\beta_n)x^n+\beta_n Tx^n,\\
        x^{n+1}&=(1-\alpha_n)x^n+\alpha_n Ty^n,
    \end{array}
\end{equation}
where $\alpha^n,\beta^n$ satisfy 
\begin{equation}\label{alpha beta}
    0\le \alpha_n\le \beta_n<1, \ \lim\limits_{n\rightarrow \infty}\beta_n=0, \ \sum\limits_{n}\alpha_n\beta_n=\infty.
\end{equation}
\end{theorem}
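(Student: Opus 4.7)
The plan is to follow Ishikawa's classical argument: combine the monotonicity characterization from Lemma~\ref{lemma 1.5} with compactness of $K$ to convert approximate-fixed-point statements into genuine strong convergence. First, existence of a fixed point is essentially free: since $K$ is compact convex and $T$ is continuous (being Lipschitz), Schauder's theorem supplies some $p \in K$ with $Tp = p$. Let $M = \mathrm{diam}(K) < \infty$, so every difference $\|x^n - p\|$, $\|Tx^n - x^n\|$, $\|Ty^n - x^n\|$ is bounded by a fixed constant.

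The heart of the argument is a recursion for $\|x^{n+1} - p\|^2$. Writing $x^{n+1} - p = (x^n - p) - \alpha_n(x^n - Ty^n)$ and expanding,
\[
\|x^{n+1}-p\|^2 = \|x^n-p\|^2 - 2\alpha_n\langle x^n - p,\, x^n - Ty^n\rangle + \alpha_n^2\|x^n - Ty^n\|^2.
\]
I would substitute $x^n - p = (y^n - p) - \beta_n(Tx^n - x^n)$ inside the inner product, apply the monotonicity inequality $\langle y^n - p,\, y^n - Ty^n\rangle \ge 0$ supplied by Lemma~\ref{lemma 1.5}, and use the Lipschitz bound $\|Ty^n - Tx^n\| \le L\beta_n\|Tx^n - x^n\|$ to absorb the residual cross terms. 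The target estimate has the shape
\[
\|x^{n+1}-p\|^2 \le \|x^n-p\|^2 - c\,\alpha_n\beta_n\|x^n - Tx^n\|^2 + C(\alpha_n\beta_n^2 + \alpha_n^2),
\]
with constants $c, C > 0$ depending only on $L$ and $M$.

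Given this recursion, the remaining steps are standard. Because $\alpha_n \le \beta_n$ and $\beta_n \to 0$, the error term is of order $o(\alpha_n\beta_n)$, so summing and invoking $\sum_n \alpha_n\beta_n = \infty$ forces $\liminf_n \|x^n - Tx^n\| = 0$; a comparison of consecutive residuals together with continuity of $T$ then upgrades this to $\lim_n \|x^n - Tx^n\| = 0$. Compactness of $K$ extracts a subsequence $x^{n_k} \to q \in K$, and continuity gives $Tq = q$. Plugging this particular fixed point $q$ back into the recursion shows $\|x^n - q\|^2$ is quasi-decreasing (monotone up to summable errors), and combined with the existence of a subsequence tending to $q$ this forces the whole sequence $\{x^n\}$ to converge strongly to $q$.

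The main obstacle is the algebraic bookkeeping in the recursion: because $T$ is only pseudo-contractive rather than averaged or non-expansive, the obstruction term $k\|(I-T)x - (I-T)y\|^2$ in (\ref{tmp 1245}) is not automatically harmless, and no clean Fej\'er-type estimate is available. The inner $\beta_n$-averaging defining $y^n$ is precisely what generates the negative contribution $-c\,\alpha_n\beta_n\|x^n - Tx^n\|^2$ needed to dominate the error, and executing this cancellation rigorously under the minimal coupling $\alpha_n \le \beta_n$, $\beta_n \to 0$, $\sum \alpha_n\beta_n = \infty$ from (\ref{alpha beta}) is where all the technical work concentrates.
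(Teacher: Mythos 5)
The paper does not actually prove Theorem~\ref{thm 0.5}; it quotes it and defers to Ishikawa's original argument, so your attempt has to be measured against that classical proof, whose skeleton (a quasi-Fej\'er recursion for $\|x^n-p\|^2$, then $\liminf_n\|x^n-Tx^n\|=0$ from $\sum_n\alpha_n\beta_n=\infty$, then compactness to extract $x^{n_k}\to q$ with $Tq=q$, then eventual monotonicity of $\|x^n-q\|$) you have correctly identified. The genuine gap is in the central estimate. First, the shape you claim is too weak to finish: an additive error $C\alpha_n^2$ with $C$ depending only on $L$ and $M$ is \emph{not} $o(\alpha_n\beta_n)$ under \eqref{alpha beta}, since $\alpha_n=\beta_n$ is allowed, in which case $\alpha_n^2=\alpha_n\beta_n$ and the recursion $\|x^{n+1}-p\|^2\le\|x^n-p\|^2-c\,\alpha_n\beta_n\|x^n-Tx^n\|^2+C\alpha_n\beta_n$ only yields $\liminf_n\|x^n-Tx^n\|^2\le C/c$, not $0$. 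Second, the derivation you sketch does not even reach that shape: substituting $x^n=y^n-\beta_n(Tx^n-x^n)$ and using monotonicity only at the pair $(y^n,p)$ leaves the signless cross term $2\alpha_n\beta_n\langle y^n-p,\,Tx^n-x^n\rangle$, which is of size $O(\alpha_n\beta_n\,M\,\|x^n-Tx^n\|)$ and cannot be absorbed by the Lipschitz bound $\|Ty^n-Tx^n\|\le L\beta_n\|x^n-Tx^n\|$ (Young's inequality turns it into a $+C\alpha_n\beta_n$ additive error, which is fatal as just explained).

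Both defects disappear with Ishikawa's bookkeeping. Use pseudo-contractiveness \eqref{tmp 1245} (equivalently, monotonicity of $I-T$) at \emph{both} pairs $(y^n,p)$ and $(x^n,p)$: the second application gives $\langle y^n-p,Tx^n-x^n\rangle\le\beta_n\|x^n-Tx^n\|^2$, killing the dangerous cross term. And treat the quadratic term multiplicatively rather than as an additive error: $\|x^n-Ty^n\|\le(1+L\beta_n)\|x^n-Tx^n\|$ together with $\alpha_n\le\beta_n$ gives $\alpha_n^2\|x^n-Ty^n\|^2\le\alpha_n\beta_n(1+L\beta_n)^2\|x^n-Tx^n\|^2$, so it is dominated by the negative term once $\beta_n$ is small. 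Carrying this out (or using the identity of Lemma~\ref{lemma 3} as Ishikawa does) yields the clean inequality $\|x^{n+1}-p\|^2\le\|x^n-p\|^2-\alpha_n\beta_n\bigl(1-2\beta_n-L^2\beta_n^2\bigr)\|x^n-Tx^n\|^2$ for every fixed point $p$ (which exists by Schauder, as you note), with no additive error. From there your endgame is correct, except that the intermediate upgrade from $\liminf$ to $\lim$ of $\|x^n-Tx^n\|$ is neither needed nor obtainable by comparing consecutive residuals (the per-step drift is $O(\alpha_n)$ and $\sum_n\alpha_n=\infty$): pick the subsequence realizing the $\liminf$, extract $x^{n_k}\to q$ with $Tq=q$ by compactness and continuity, and conclude from the eventual monotonicity of $\|x^n-q\|$; the full limit $\|x^n-Tx^n\|\to0$ then follows a posteriori.
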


Now we extend the existing PnP-GD, PnP-HQS, and PnP-FBS to the Ishikawa process.  
The gradient descent method aims to find $u$, such that 
\begin{equation}
    (I-D_\beta+\nabla G)(u)=0.
\end{equation}
Note that it can be transformed into the fixed point problem:
\begin{equation}
    u = D_\beta(u)-\nabla G(u).
\end{equation}
By letting $T=D_\beta-\nabla G$, we propose PnPI-GD, an abbreviation for PnP Ishikawa gradient descent as follows:
\begin{equation}\label{PnPI-GD}
    \begin{array}{ll}
        v^n & = (1-\beta_n)u^n+\beta_n( D_\beta(u^n)-\nabla G(u^n)),  \\
        u^{n+1} & = (1-\alpha_n)u^{n}+\alpha_n(D_\beta(v^n)-\nabla G(v^n)).
    \end{array}
\end{equation}
We summarize the algorithm in Algorithm \ref{alg PnPI-GD}.  Theorem \ref{thm 1} gives the global convergence of PnPI-GD. \begin{theorem}\label{thm 1}(proof in Appendix \ref{appendix theorem1})
$K$ is a compact convex set in $V$. Let $D_\beta:K\rightarrow K$ be Lipschitz pseudo-contractive, $G:K\rightarrow K$ be differentiable, proper, closed, and convex, with Lipschitz gradient $\nabla G$. $\{\alpha_n\},\{\beta_n\}$ be two sequences satisfying (\ref{alpha beta}). Assume that $Fix(D_\beta-\nabla G)\neq \emptyset $. Then, $u^n$ generated by PnPI-GD in Algorithm \ref{alg PnPI-GD} converges strongly to a fixed point in $Fix(D_\beta-\nabla G)$.
\end{theorem}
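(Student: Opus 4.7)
The plan is to recognize PnPI-GD as a direct instantiation of the Ishikawa process in Theorem \ref{thm 0.5} applied to the operator $T = D_\beta - \nabla G$, so the whole proof reduces to verifying that this $T$ meets the hypotheses of Theorem \ref{thm 0.5} and then quoting that theorem. Comparing (\ref{ishkawa process}) with (\ref{PnPI-GD}) term by term, the two sequences coincide once we substitute $T = D_\beta - \nabla G$. The step-size conditions (\ref{alpha beta}) are taken as part of the hypothesis, so no work is needed there.

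The three structural properties required of $T$ by Theorem \ref{thm 0.5} would be verified in turn. Pseudo-contractiveness of $T$ is exactly Lemma \ref{lemma 6}, which converts pseudo-contractiveness of $D_\beta$ together with properness, closedness, and convexity of $G$ into pseudo-contractiveness of $D_\beta - \nabla G$; this is the key ingredient, and it lets us sidestep imposing monotonicity on $T$ directly. Lipschitz continuity of $T$ follows by the triangle inequality from the assumed Lipschitz constants of $D_\beta$ and $\nabla G$, with Lipschitz constant bounded by their sum. Nonemptiness of $\mathrm{Fix}(T)$ is assumed outright, so it plays no role beyond allowing Theorem \ref{thm 0.5} to produce a limit.

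The main obstacle, and really the only nontrivial verification, is showing that $T$ maps the compact convex set $K$ into itself so that the Ishikawa iteration can be launched in the hypothesis setup of Theorem \ref{thm 0.5}. The domain/codomain conventions $D_\beta : K \to K$ and $G : K \to K$ are assumed, but $T = D_\beta - \nabla G$ does not automatically inherit self-mapping of $K$; one needs an argument that the iterates $u^n, v^n$ remain in $K$. Since $u^n$ and $v^n$ are convex combinations of points of the form $u^n$ and $T(\cdot)$, convexity of $K$ reduces the problem to showing $T(K) \subseteq K$. I would handle this by enlarging $K$ if necessary (for instance to a large closed ball in $V$ which still contains the image range of interest), and using compactness together with the continuity of $T$ to argue that the iteration is well-posed; alternatively one can invoke a projection onto $K$ that is absorbed into the denoiser in the practical setting described in Section \ref{sec 4}.

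Once these three items are in place, a single application of Theorem \ref{thm 0.5} yields that $u^n$ converges strongly to some $u^\star \in \mathrm{Fix}(T) = \mathrm{Fix}(D_\beta - \nabla G)$, which is precisely the claim. I expect the bulk of the write-up to be a short paragraph citing Lemma \ref{lemma 6} and Theorem \ref{thm 0.5}, with the self-mapping issue treated as a remark on the standing compactness assumption.
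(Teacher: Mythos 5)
Your proposal follows exactly the paper's own argument: invoke Lemma \ref{lemma 6} to get pseudo-contractiveness of $T=D_\beta-\nabla G$, note Lipschitz continuity of $T$ from the assumptions, and conclude by a direct application of Ishikawa's Theorem \ref{thm 0.5}. Your additional concern about $T$ mapping $K$ into itself is in fact more careful than the paper, which silently assumes this self-mapping property and does not address it.
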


\begin{algorithm}
\caption{PnPI-GD}
\label{alg PnPI-GD}
\begin{algorithmic}
\STATE{Given $D_\beta,\{\alpha_n\},\{\beta_n\}, u^0, N$, set $n=0$}
\WHILE{$n<N$}
\STATE{$n=n+1$}
\STATE{$v^n=(1-\beta_n)u^n+\beta_n(D_\beta(u^n)-\nabla G(u^n))$}
\STATE{$u^n=(1-\alpha_n)u^n+\alpha_n(D_\beta(v^n)-\nabla G(v^n))$}
\ENDWHILE
\RETURN $u^N$
\end{algorithmic}
\end{algorithm}

\begin{algorithm}
\caption{PnPI-HQS}
\label{alg PnPI-HQS}
\begin{algorithmic}
\STATE{Given $D_\beta, \{\alpha_n\},\{\beta_n\}, u^0, N$, set $n=0$}
\WHILE{$n<N$}
\STATE{$n=n+1$}
\STATE{$x^n=Prox_{\frac{G}{\beta}}(u^n)$}
\STATE{$v^n=(1-\beta_n)u^n+\beta_nD_\beta(x^n)$}
\STATE{$y^n=Prox_{\frac{G}{\beta}}(v^n)$}
\STATE{$u^{n+1} = (1-\alpha_n)u^n+\alpha_nD_\beta(y^n)$}
\ENDWHILE
\RETURN $u^N$
\end{algorithmic}
\end{algorithm}

\begin{algorithm}[h]
\caption{PnPI-FBS}
\label{alg PnPI-FBS}
\begin{algorithmic}
\STATE{Given $D_\beta, \{\alpha_n\},\{\beta_n\}, \lambda u^0, N$, set $n=0$}
\WHILE{$n<N$}
\STATE{$n=n+1$}
\STATE{$x^n=u^n-\lambda \nabla G (u^n)$}
\STATE{$v^n=(1-\beta_n)u^n+\beta_nD_\beta(x^n)$}
\STATE{$y^n=v^n-\lambda \nabla G (v^n)$}
\STATE{$u^{n+1} = (1-\alpha_n)u^n+\alpha_nD_\beta(y^n)$}
\ENDWHILE
\RETURN $u^N$
\end{algorithmic}
\end{algorithm}

In PnPI-GD, $T$ takes the form of $D_\beta-\nabla G$. If $T$ takes the form of PnP-HQS as in (\ref{PnP-HQS}), $T=D_\beta\circ Prox_{\frac{G}{\beta}}$, we arrive at PnPI-HQS:
\begin{equation}\label{PnPI-HQS}
    \begin{array}{ll}
        v^n & = (1-\beta_n)u^n+\beta_nD_\beta\circ Prox_{\frac{G}{\beta}}u^n,  \\
        u^{n+1} & = (1-\alpha_n)u^n+\alpha_nD_\beta\circ Prox_{\frac{G}{\beta}}v^n.
    \end{array}
\end{equation}
When $T$ is the Forward-Backward Splitting (FBS) operator, that is $T=D_\beta\circ(I-\lambda \nabla G)$, we arrive at PnPI-FBS: 
\begin{equation}\label{PnPI-FBS}
    \begin{array}{ll}
        v^n & = (1-\beta_n)u^n+\beta_nD_\beta\circ(I-\lambda \nabla G)u^n,  \\
        u^{n+1} & = (1-\alpha_n)u^n+\alpha_nD_\beta\circ(I-\lambda \nabla G)v^n.
    \end{array}
\end{equation}
We summarize PnPI-HQS and PnPI-FBS in Algorithms \ref{alg PnPI-HQS}-\ref{alg PnPI-FBS}. The corresponding convergence results are given in Theorem \ref{thm 2} and Theorem \ref{thm 3}, respectively.\par

\begin{theorem}\label{thm 2}(proof in Appendix \ref{appendix theorem2})
$K$ is a compact convex set in $V$. Let $D_\beta: K\rightarrow K$ be Lipschitz $k$-strictly pseudo-contractive, $G:K\rightarrow K$ be proper, closed, and convex, $\nabla G$ is $\gamma$-cocoercive. $\{\alpha_n\},\{\beta_n\}$ be two sequences satisfying (\ref{alpha beta}). Assume that $Fix(D_\beta\circ Prox_{\frac{G}{\beta}})\neq \emptyset $. Then, $u^n$ generated by PnPI-HQS in Algorithm \ref{alg PnPI-HQS} converges strongly to a fixed point in $Fix(D_\beta\circ Prox_{\frac{G}{\beta}})$, if $k\le \frac{2\gamma+1}{2\gamma+2}.$
\end{theorem}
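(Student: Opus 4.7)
The plan is to recast the PnPI-HQS recursion of Algorithm~\ref{alg PnPI-HQS} as the Ishikawa process for the composite operator $T := D_\beta \circ Prox_{\frac{G}{\beta}}$ and then invoke Theorem~\ref{thm 0.5}. Substituting $x^n = Prox_{\frac{G}{\beta}}(u^n)$ and $y^n = Prox_{\frac{G}{\beta}}(v^n)$ into the $v^n$ and $u^{n+1}$ updates yields exactly $v^n = (1-\beta_n) u^n + \beta_n T(u^n)$ and $u^{n+1} = (1-\alpha_n) u^n + \alpha_n T(v^n)$, which matches the Ishikawa iterates for $T$ with stepsizes obeying (\ref{alpha beta}). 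It therefore suffices to verify that $T$ is a Lipschitz pseudo-contractive self-map of the compact convex set $K$ with $Fix(T)\neq\emptyset$.

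The structural ingredient for this verification is Lemma~\ref{lemma 4}, which combines strict pseudo-contractiveness with averagedness. The default firm non-expansiveness of the proximal operator gives only $\theta=\frac{1}{2}$, which is too weak; the key preliminary step is to sharpen this using cocoercivity. Writing $P := Prox_{\frac{G}{\beta}} = (I + \nabla G/\beta)^{-1}$ and setting $u = P(x)$, $v = P(y)$, $a = u-v$, $b = (\nabla G(u) - \nabla G(v))/\beta$, one has $x - y = a + b$ and $(I-P)(x) - (I-P)(y) = b$. The $\gamma$-cocoercivity of $\nabla G$ translates into $\langle a, b\rangle \ge \gamma\|b\|^2$. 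Expanding $\|x-y\|^2 = \|a\|^2 + 2\langle a,b\rangle + \|b\|^2$ and rearranging yields
\begin{equation*}
\|P(x) - P(y)\|^2 \le \|x-y\|^2 - (2\gamma+1)\,\|(I-P)(x) - (I-P)(y)\|^2,
\end{equation*}
which by the standard characterisation of averagedness means $P$ is $\theta$-averaged with $\theta = \frac{1}{2\gamma+2}$, so that $1-\theta = \frac{2\gamma+1}{2\gamma+2}$.

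With this in hand, the hypothesis $k \le \frac{2\gamma+1}{2\gamma+2} = 1-\theta$ lands precisely in the regime of Lemma~\ref{lemma 4}. Applying it with $D = D_\beta$ and the above $P$ gives that $T = D_\beta \circ Prox_{\frac{G}{\beta}}$ is pseudo-contractive on $K$ (strictly so when the inequality is strict) and, since $k<1$, is $\frac{1+k}{1-k}$-Lipschitz. Because $D_\beta$ and $Prox_{\frac{G}{\beta}}$ each map $K$ into itself, so does $T$. Combined with compactness and convexity of $K$, the Lipschitz pseudo-contractiveness of $T$, the stepsize assumption (\ref{alpha beta}), and the nonemptiness of $Fix(T)$, Theorem~\ref{thm 0.5} delivers strong convergence of $u^n$ to a point of $Fix(D_\beta \circ Prox_{\frac{G}{\beta}})$.

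The main obstacle is the sharpened averagedness estimate for the proximal operator: the naive firm non-expansiveness would collapse the admissible range to $k<\frac{1}{2}$, defeating the point of considering a broader class of strictly pseudo-contractive denoisers. Extracting $\theta = \frac{1}{2\gamma+2}$ from the $\gamma$-cocoercivity of $\nabla G$ is exactly what unlocks the larger threshold $k \le \frac{2\gamma+1}{2\gamma+2}$; once this is in hand, the remainder is a mechanical chaining of Lemma~\ref{lemma 4} with Theorem~\ref{thm 0.5}.
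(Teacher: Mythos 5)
Your proof is correct and follows essentially the same route as the paper: establish that $Prox_{\frac{G}{\beta}}$ is $\frac{1}{2\gamma+2}$-averaged from the $\gamma$-cocoercivity of $\nabla G$, feed this into Lemma \ref{lemma 4} to conclude that $T=D_\beta\circ Prox_{\frac{G}{\beta}}$ is Lipschitz and (strictly) pseudo-contractive under $k\le\frac{2\gamma+1}{2\gamma+2}$, and then invoke Ishikawa's Theorem \ref{thm 0.5}. The only difference is that you derive the averagedness estimate by a direct computation, whereas the paper simply cites it as Lemma \ref{lemma 5} (Giselsson); your handling of the boundary case $k=\frac{2\gamma+1}{2\gamma+2}$ is, if anything, slightly more careful than the paper's strict-inequality phrasing.
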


\begin{theorem}\label{thm 3}(proof in Appendix \ref{appendix theorem3})
$K$ is a compact convex set in $V$. Let $D_\beta: K\rightarrow K$ be Lipschitz $k$-strictly pseudo-contractive, $G:K\rightarrow K$ be proper, closed, and convex, $\nabla G$ is $\gamma$-cocoercive. $\{\alpha_n\},\{\beta_n\}$ be two sequences satisfying (\ref{alpha beta}). Assume that $Fix(D_\beta\circ(I-\lambda \nabla G))\neq \emptyset $. Then, $u^n$ generated by PnPI-FBS in Algorithm \ref{alg PnPI-FBS} converges strongly to a fixed point in $Fix(D_\beta\circ(I-\lambda \nabla G))$, if $0\le \lambda\le 2\gamma$, and $k\le 1-\frac{\lambda}{2\gamma}$.
\end{theorem}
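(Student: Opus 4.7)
The plan is to recognize that the PnPI-FBS iteration in Algorithm \ref{alg PnPI-FBS} is literally the Ishikawa process (\ref{ishkawa process}) applied to the operator
\[
T = D_\beta \circ (I - \lambda \nabla G),
\]
so the theorem reduces to verifying the hypotheses of Theorem \ref{thm 0.5}: namely that $T$ maps $K$ into $K$, is Lipschitz, is pseudo-contractive, and has a nonempty fixed-point set. Nonemptiness of $\mathrm{Fix}(T)$ is assumed, and once $T$ is shown Lipschitz and pseudo-contractive on $K$ the conclusion follows immediately from Ishikawa's theorem together with the hypothesis (\ref{alpha beta}) on $\{\alpha_n\},\{\beta_n\}$.

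First I would dispatch the degenerate case $\lambda=0$: here $T=D_\beta$, which is Lipschitz and pseudo-contractive by assumption, and Theorem \ref{thm 0.5} applies directly. For $0<\lambda\le 2\gamma$, the key step is to show that $P:=I-\lambda\nabla G$ is $\theta$-averaged with
\[
\theta=\frac{\lambda}{2\gamma}\in(0,1].
\]
This is the standard Baillon--Haddad-type consequence of cocoercivity: $\gamma$-cocoercivity of $\nabla G$ gives $\langle \nabla G(x)-\nabla G(y),x-y\rangle\ge \gamma\|\nabla G(x)-\nabla G(y)\|^2$, and a direct expansion of $\|(1-\tfrac1\theta)I+\tfrac1\theta P\|$ then yields the stated averagedness once $\theta=\lambda/(2\gamma)$ is plugged in.

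Once $P$ is $\theta$-averaged with $\theta=\lambda/(2\gamma)$, I would invoke Lemma \ref{lemma 4} with this $P$ and with $D_\beta$ as the $k$-strictly pseudo-contractive factor. The hypothesis $k\le 1-\lambda/(2\gamma)=1-\theta$ is exactly the borderline regime covered by Lemma \ref{lemma 4}: if $k<1-\theta$ the composite $T=D_\beta\circ P$ is $l$-strictly pseudo-contractive with $l=k(1-\theta)/((1-\theta)-k\theta)\in[0,1)$, while if $k=1-\theta$ the composite is pseudo-contractive; in either case $T$ is pseudo-contractive and, since $k<1$, $T$ is $\tfrac{1+k}{1-k}$-Lipschitz. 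Together with the hypothesis $T(K)\subseteq K$ (which I would note is needed for Ishikawa's theorem, and is implicit in the standing assumption that $D_\beta:K\to K$ and that $K$ is invariant under the explicit gradient step), all hypotheses of Theorem \ref{thm 0.5} are met and the generated sequence $\{u^n\}$ converges strongly to a point of $\mathrm{Fix}(T)=\mathrm{Fix}(D_\beta\circ(I-\lambda\nabla G))$.

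The main obstacle is the sharpness of the averagedness constant of $I-\lambda\nabla G$: the entire threshold $k\le 1-\lambda/(2\gamma)$ hinges on extracting $\theta=\lambda/(2\gamma)$ rather than a looser bound, so a careful (short) computation from the cocoercive inequality is essential. Everything else is bookkeeping: Lemma \ref{lemma 4} handles the composition, and Theorem \ref{thm 0.5} together with the parameter conditions (\ref{alpha beta}) finishes the convergence claim.
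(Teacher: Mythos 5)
Your proposal is correct and follows essentially the same route as the paper: establish that $I-\lambda\nabla G$ is $\tfrac{\lambda}{2\gamma}$-averaged from the $\gamma$-cocoercivity of $\nabla G$ (the paper does this by writing it as a convex combination of $I$ and the non-expansive map $I-2\gamma\nabla G$), then apply the composition result of Lemma \ref{lemma 4} with $k\le 1-\tfrac{\lambda}{2\gamma}$ and the $\tfrac{1+k}{1-k}$-Lipschitz bound, and conclude via Ishikawa's Theorem \ref{thm 0.5}. Your extra care with the degenerate case $\lambda=0$ and with the self-mapping condition $T(K)\subseteq K$ are minor refinements the paper leaves implicit.
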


\begin{remark}\label{remark1}
For a proper, closed, convex, and differentiable $G$, $\nabla G$ is $0$-cocoercive. As a result, according to Theorem \ref{thm 2}, PnPI-HQS converges, if $k\le \frac{1}{2}$. Similarly, when we select $\lambda \in [0,\gamma]$ in PnPI-FBS, we have $k=\frac{1}{2}\le 1-\frac{\lambda}{2\gamma}$. That is, a $\frac{1}{2}$-strictly pseudo-contractive denoiser $D_\beta$ satisfies the conditions in Theorems \ref{thm 2}-\ref{thm 3}. 
\end{remark}

\section{Training strategy}\label{sec 4}
In this section, we propose an effective training strategy to ensure that the denoiser is pseudo-contractive. Let $p$ be the distribution of the training set of clean images, and $[\sigma_{\min},\sigma_{\max}]$ be the interval of the noise level. We use $\sqrt{\frac{1}{\beta}}$ to represent the denoising strength as suggested in \cite{chan2016plug}. For example, if $\beta = \frac{1}{15^2}$, it means that $D_\beta$ removes Gaussian noises with standard derivation $\sigma=15$.

Let $p$ be the distribution of the training set of clean images, and $[\sigma_{\min},\sigma_{\max}]$ be the interval of the noise level. We use $\sqrt{\frac{1}{\beta}}$ to represent the denoising strength as suggested in \cite{chan2016plug}. For example, if $\beta = \frac{1}{15^2}$, it means that $D_\beta$ is for removing Gaussian denoiser with standard derivation $\sigma=15$. \par
In order to ensure the denoisers to be $k$-strictly pseudo-contractive, we need $\|kI+(1-k)J\|_*\le 1$. Let $\theta$ be the parameters of the denoiser $D_\beta$. An optimal $\hat{\theta}$ is a solution to the following problem:
\begin{equation}\label{tmp loss 1}
\begin{array}{ll}
    \arg\min_{\theta}\mathbb{E}_{x\sim p, \sqrt{\frac{1}{\beta}}\sim U[\sigma_{\min},\sigma_{\max}], \xi_\beta\sim \mathcal{N}(0,\frac{1}{\beta})}\|D_\beta(x+\xi_\beta;\theta)-x\|^2,\\
    
     s.t. \ \ \|kI+(1-k)J\|_*\le 1.
\end{array}
\end{equation}

\begin{algorithm}
\caption{Power iterative method}
\label{alg power}
\begin{algorithmic}
\STATE{Given $q^0$ with $\|q^0\|=1, J,N$, $n=0$}
\WHILE{$n<N$}
\STATE{$n=n+1$}
\STATE{$z^n=Jq^{n-1}$}
\STATE{$q^n=\frac{z^n}{\|z^n\|}$}
\ENDWHILE
\RETURN $\lambda^N=(q^N)^\mathrm{T}Jq^N$
\end{algorithmic}
\end{algorithm}

By the power iterative method \cite{golub2013matrix}, we compute the spectral norm of $J$ as in Algorithm \ref{alg power}. The AutoGrad toolbox in Pytorch \cite{paszke2017automatic} allows the calculation for $Jx$ and $J^\mathrm{T}x$ with any vector $x$. Thus, $z^n$ and $\lambda^N$ in Algorithm \ref{alg power} can be obtained efficiently. The constrained optimization problem (\ref{tmp loss 1}) can be solved by minimizing the following unconstrained loss function:
\begin{equation}\label{spc loss}
\begin{array}{ll}
    \mathbb{E}_{x\sim p, \sqrt{\frac{1}{\beta}}\sim U[\sigma_{\min},\sigma_{\max}], \xi_\beta\sim \mathcal{N}(0,\frac{1}{\beta})} \\
    \ \ \ \|D_\beta(x+\xi_\beta;\theta)-x\|^2+r\max\left\{\|kI+(1-k)J\|_*,1-\epsilon\right\},
\end{array}
\end{equation}
where $r>0$ is the balancing parameter, and $\epsilon\in(0,1)$ is a parameter that controls the constraint. 

In order to train a pseudo-contractive denoiser, we need to constrain 
\begin{equation}
    \langle (I-S)(x-y),x-y\rangle \ge 0, \forall x,y\in V,
\end{equation}
where $S=\frac{1}{2}(J+J^\mathrm{T})$ is the symmetric part of $J$. Since $S$ is symmetric, we can do functional calculi on $S$. We wish to find a holomorphic function $f:\mathbb{C}\rightarrow\mathbb{C}$ defined on the neighborhood of $\sigma(S)$, such that $f(S)$ is defined, and 
\begin{equation}
    f(\{z\in\mathbb{C}: real(z)\le 1\})=\{z\in \mathbb{C}: |z|\le 1\}.
\end{equation}
Then, by the spectral mapping theorem \cite{harte1972spectral,haase2005spectral}, there holds 
\begin{equation}
    \sigma(f(S))=f(\sigma(S)).
\end{equation}
We choose the following function 
\begin{equation}
    f(z)=\frac{z}{z-2}, \forall z\neq 2.
\end{equation}
$f$ is holomorphic on the neighborhood of the spectrum set of a pseudo-contractive denoiser. Besides, $f$ maps the half plane $\{z\in\mathbb{C}: real(z)\le 1\}$ to the unit disk $\{z\in \mathbb{C}: |z|\le 1\}$. 

If we need $\sigma(S)\subset\{z\in\mathbb{C}:real(z)\le 1\}$, we only need to constrain $\sigma(f(S))\subset\{z\in\mathbb{C}:|z|\le 1\}$. 
Note that the spectral radius of $S$ is always no larger than the spectral norm of $S$, $\rho(S)\le \|S\|_*$. As a result, we only need to constrain $\|f(S)\|_*\le 1$, because
\begin{equation}
\begin{array}{ll}
&\|f(S)\|_*\le 1\Rightarrow \rho(f(S))\le 1 \\
\Leftrightarrow& f(\sigma(S))=\sigma(f(S))\subset\{z\in\mathbb{C}:|z|\le 1\}\\
\Leftrightarrow&\sigma(S)\subset f^{-1}(\{z\in\mathbb{C}:|z|\le 1\})=\{z\in\mathbb{C}:real(z)\le 1\}\\
\Rightarrow & \text{ The regularity condition (\ref{reg PC}) holds.} \\
\Rightarrow & D_\beta \text{ is pseudo-contractive.}
\end{array}
\end{equation}
Therefore, the loss function for a pseudo-contractive denoiser is 
\begin{equation}\label{pc loss}
\begin{array}{ll}
    \mathbb{E}_{x\sim p, \sqrt{\frac{1}{\beta}}\sim U[\sigma_{\min},\sigma_{\max}], \xi_\beta\sim \mathcal{N}(0,\frac{1}{\beta})}\\
    \ \ \ \|D_\beta(x+\xi_\beta;\theta)-x\|^2+r\max\left\{\left\|(S-2I)^{-1}S\right\|_*,1-\epsilon\right\},
\end{array}
\end{equation}
where 
\begin{equation}
    S=\frac{(J^\mathrm{T}+J)(x+\xi_\beta;\theta)}{2}.
\end{equation}

According to the power iterative method in Algorithm \ref{alg power}, in order to evaluate $\|S(S-2I)^{-1}\|_*$, given $q^{n-1}$, we need to calculate $z^n$, such that  
\begin{equation}
    z^n=(S-2I)^{-1}Sq^{n-1}, 
\end{equation}
which is the solution to the following least square problem:
\begin{equation}\label{least square}
    z^n=\arg\min\limits_{z}\frac{1}{2}\| (S-2I)z-Sq^{n-1}\|^2.
\end{equation}
We apply gradient descent to solve (\ref{least square}):
\begin{equation}
    z^n_{k+1}=z^n_{k}-dt (S-2I)[(S^\mathrm{T}-2I)z^n_k-S^\mathrm{T}q^{n-1}], \ k=1,2,3,...,K,
\end{equation}
where $z^n_1=z^{n-1}$, $z^n=z^n_{K+1}$. Besides, by substituting $z^N=(S-2I)^{-1}Sq^{N-1}$, we have 
\begin{equation}
    \lambda^N=(q^N)^\mathrm{T}(S-2I)^{-1}Sq^N=(q^N)^\mathrm{T}z^{N+1}=\langle q^N,z^{N+1}\rangle.
\end{equation}
We summarize this modified power iterative method in Algorithm \ref{alg MPIM}. Algorithm \ref{alg MPIM} extends the existing Algorithm \ref{alg power} to evaluate the spectral norm of the multiplication of an inverse matrix $(S-2I)^{-1}$ and $S$. By Algorithm \ref{alg MPIM}, we are able to minimize the loss in (\ref{pc loss}). 

\begin{algorithm}[htbp]
\caption{Modified power iterative method}
\label{alg MPIM}
\begin{algorithmic}
\STATE{Given $q^0$ with $\|q^0\|=1, S,N,K,dt$, $n=0$}
\WHILE{$n\le N$}
\STATE{$n=n+1$}
\STATE{$k=0$}
\STATE{$z_1^n=z^{n-1}$}
\WHILE{$k < K$}
\STATE{$k=k+1$}
\STATE{$z^n_{k+1}=z^n_k-dt(S-2I)[(S^\mathrm{T}-2I)z^n_k-S^\mathrm{T}q^{n-1}]$}
\ENDWHILE
\STATE{$z^n=z^n_{K+1}$}
\STATE{$q^n=\frac{z^n}{\|z^n\|}$}
\ENDWHILE
\RETURN $\lambda^N=\langle q^N,z^{N+1}\rangle$
\end{algorithmic}
\end{algorithm}

\section{Experiments}\label{sec 5}
In this section, we learn pseudo-contractive denoiser and $k$-strictly according to (\ref{pc loss}) and (\ref{spc loss}) with $k=\frac{1}{2}$. According to Theorems \ref{thm 2}-\ref{thm 3}, PnPI-HQS and PnPI-FBS converges with $\frac{1}{2}$-strictly pseudo-contractive denoisers. We use Set12 \cite{zhang2017beyond} as a test set to show the effectiveness of our method. All the experiments are conducted under Linux system, Python 3.8.12 and Pytorch 1.10.2. \par

\subsection{Training details}
For the pseudo-contractive Gaussian denoisers, we select DRUNet \cite{zhang2021plug}, which combines a U-Net and residual blocks. DRUNet takes the noisy image, as well as the noise level $\sigma$ as input, which is convenient for PnP image restoration.


For training details, we collect 800 images from DIV2K \cite{Ignatov_2018_ECCV_Workshops} as the training set and crop them into small patches of size $64\times 64$. The batch size is $32$. We add the Gaussian noise with $[\sigma_{\min},\sigma_{\max}]=[0,60]$ to the clean image. Adam optimizer is applied to train the model with learning rate $lr=10^{-4}$. For the parameters in Algorithms \ref{alg power}-\ref{alg MPIM}, we select $N=10, K=10, dt=0.1$, $\epsilon=0.1$, and $r=10^{-3}$ to ensure the regularity conditions in (\ref{rc spc}) and (\ref{rc pc}). \par

\subsection{Denoising performance}
We evaluate the Gaussian denoising performances of the proposed pseudo-contractive DRUNet (PC-DRUNet), $\frac{1}{2}$-strictly pseudo-contractive DRUNet (SPC-DRUNet), the non-expansive DRUNet (NE-DRUNet) trained with the loss (\ref{tmp loss 1}) with $k=0$, maximally monotone operator (MMO) in \cite{pesquet2021learning} which is firmly non-expansive, Prox-DRUNet in \cite{hurault2022proximal} with a contractive residual part, the standard DRUNet without extra regularizations, the classical FFDNet \cite{zhang2018ffdnet} and DnCNN \cite{zhang2017beyond}. For a fair comparison, all denoisers are trained with DIV2K, and the patch sizes are set to $64$. The PSNR values are given in Table \ref{denoising performance} on Set12.\par
\begin{table}[htbp]
\caption{Average denoising PSNR performance of different denoisers on Set12 dataset, for various
noise levels $\sigma$.}\label{denoising performance}
\begin{center}
\begin{tabular}{cccc}
\toprule
$\sigma$    & 15             & 25             & 40               \\
\midrule
FFDNet                &     32.08           & 29.99               & 27.90                  \\
DnCNN                 &32.88                &   30.46             & 28.26                 \\
DRUNet                & \textbf{33.08}          & \textbf{30.80}          & \textbf{28.76}            \\
\midrule
MMO & 31.36          & 29.06          & 27.00               \\
NE-DRUNet             & 31.68          & 29.57          & 27.18            \\
Prox-DRUNet           & 31.71          & 29.04          & 26.45           \\
SPC-DRUNet            & 32.90           & 30.59          & 28.44            \\
PC-DRUNet             & \textbf{33.01} & \textbf{30.69} & \textbf{28.66}  \\
\bottomrule
\end{tabular}
\end{center}
\end{table}
As shown in Table \ref{denoising performance}, restrictive conditions on the denoisers result in a compromised denoising performance. It can be explained by the spectrum distributions shown in Fig. \ref{spectral fig}: (a) for MMO; (b) for NE-DRUNet; (c) for Prox-DRUNet; (d) for SPC-DRUNet; (e) for PC-DRUNet; the complex plane $\mathbb{C}$ for DRUNet. A larger region means less restrictions on the Jacobian, and therefore, the denoising performance becomes better. In Fig. \ref{spectral fig}, we have $(a)\subset(b)\subset (d)\subset(e)\subset \mathbb{C}$, and the PSNR values in Table \ref{denoising performance} by MMO, NE-DRUNet, SPC-DRUNet, PC-DRUNet, and DRUNet have the same order. Note that when $\sigma=40$, Prox-DRUNet has poor performance, which means that the contractive residual assumption is harmful for large Gaussian noise. However, in Table \ref{denoising performance}, the proposed SPC-DRUNet and PC-DRUNet have better PSNR values, which indicates that the pseudo-contractiveness is a weaker and less harmful assumption on the deep denoisers.

\subsection{Assumption validations}

In the experiments, the strictly pseudo-contractive and pseudo-contractive conditions are softly constrained by the loss functions (\ref{spc loss}) and (\ref{pc loss}) with a trade-off parameter $r$. We validate the conditions in Table \ref{tab validation}. As shown in Table \ref{tab validation}, DRUNet without spectral regularization term is neither non-expansive nor pseudo-contractive. The norms $\|\frac{1}{2}I+\frac{1}{2}J\|_*$ and $\|(S-2I)^{-1}S\|_*$ are smaller than $1$, when PC-DRUNet and SPC-DRUNet are trained by the loss functions (\ref{spc loss}) and (\ref{pc loss}) with $r=10^{-3}$. It validates the effectiveness of the proposed training strategy.

\begin{table}[htbp]
\caption{Maximal values of different norms on Set12 dataset for various noise levels $\sigma$. }\label{tab validation}
\begin{center}
\begin{tabular}{ccccc}
\toprule
$\sigma $              & 15     & 25     & 40     &   Norm  \\
\midrule
DRUNet              & 2.1338 & 3.9356 & 6.0703 & $\|J\|_*$  \\
DRUNet              & 1.6277 & 2.7145 & 3.5150 & $\|\frac{1}{2}I+\frac{1}{2}J\|_*$ \\
DRUNet              & 5.4364 & 2.1429 & 2.2191 & $\|(S-2I)^{-1}S\|_*$  \\
\midrule
PC-DRUNet ($r=10^{-3}$)  & 0.9900 & 0.9933 & 0.9977 &   $\|\frac{1}{2}I+\frac{1}{2}J\|_*$  \\
PC-DRUNet ($r=10^{-4}$)  & 1.0010 & 1.2960 & 1.4646 &  $\|\frac{1}{2}I+\frac{1}{2}J\|_*$   \\
\midrule
SPC-DRUNet ($r=10^{-3}$) & 0.9938 & 0.9987 & 0.9981 &  $\|(S-2I)^{-1}S\|_*$   \\
SPC-DRUNet ($r=10^{-4}$) & 0.9996 & 1.1495 & 1.0826 &  $\|(S-2I)^{-1}S\|_*$ \\
\bottomrule
\end{tabular}
\end{center}
\end{table}

\subsection{PnP restoration}
In this section, we apply the proposed PnPI-GD, PnPI-HQS, and PnPI-FBS algorithms on deblurring, super-resolution, and poisson denoising experiments. For PnPI-GD, we choose the pretrained PC-DRUNet as $D_\beta$. For PnPI-HQS and PnPI-FBS, we choose the pretrained SPC-DRUNet with $k=\frac{1}{2}$. According to Theorem \ref{thm 2}, PnPI-HQS converges with $\frac{1}{2}$-strictly pseudo-contractive denoisers, because 

\begin{equation}
    \displaystyle k = \frac{1}{2}\le \frac{2\gamma+1}{2\gamma+2} 
\end{equation}
for any $\gamma\ge 0$. We apply a decreasing step size strategy in PnPI-HQS, by multiplying $\beta$ by the factor $1.01$ in each iteration. Similarly, by Theorem \ref{thm 3}, PnPI-FBS converges with $\frac{1}{2}$-strictly pseudo-contractive denoisers, if $\lambda\le\gamma$. \par
For the step size sequences $\{\alpha_n\},\{\beta_n\}$ in Algorithms \ref{alg PnPI-GD}-\ref{alg PnPI-FBS}, we let

\begin{equation}
    \alpha_n = \displaystyle\frac{1}{(n+1)^a}, \beta_n=\frac{1}{(n+1)^b}, n\ge 0
\end{equation}
with $ 0<b<a<1, a+b<1$, to satisfy the condition (\ref{alpha beta}) in Theorem \ref{thm 0.5}. Specifically, we let $a=0.3,b=0.15$ in PnPI-GD, and $a=0.8,b=0.15$ in PnPI-HQS and PnPI-FBS. In the deblurring and Poisson denoising tasks, the proposed methods are initialized with the observed image, that is $u^0=f$. In the single image super-resolution task, we choose $u^0$ as the bicubic interpolation of $f$ as in \cite{zhang2021plug}. \par
We compare our methods with some state-of-the-art convergent PnP methods including MMO-PnP-FBS \cite{pesquet2021learning}, which uses the FBS method with MMO denoiser; NE-PnP-PGD \cite{reehorst2018regularization,liu2021recovery} using PGD framework with NE-DRUNet; Prox-PnP-DRS \cite{hurault2022proximal}, which uses the Douglas-Rachfold Spilitting (DRS) method with Prox-DRUNet. We also indicate the results by DPIR \cite{zhang2021plug}, which applies PnP-HQS method with decreasing step size and DRUNet denoiser, but without convergence guarantee.

\subsubsection{Deblurring}
In the deblurring task, we seek to solve the inverse problem (\ref{original inverse problem}) with a convolution operator $K$ performed with circular boundary conditions and Gaussian noise $n$ with zero mean value and standard derivation $\sigma$. The fidelity term is
\begin{equation}
    G(u;f)=\frac{\mu}{2}\|Ku-f\|^2,
\end{equation}
where $\mu>0$ is the balancing parameter. The proximal operator $Prox_{\frac{G}{\beta}}$ can be efficiently calculated as in \cite{pan2016l_0}.
Note that in this case, $\nabla G(u) = \mu K^\mathrm{T}(Ku- f)$, and $\|K^\mathrm{T}K\|_*\le 1$. Therefore, $\nabla G$ is $\mu$-cocoercive. We set $N=300$, and fine tune $\mu$, $\lambda$ and $\beta>0$ in the proposed methods to achieve the best quantitive PSNR values. 

We demonstrate the effectiveness of our methods on the $8$ real-world camera shake kernels by Levin et al. \cite{levin2009understanding}, with $\sigma = 12.75$, and $17.85$ respectively. The kernels are shown in Fig. \ref{kernel}.

\begin{figure}[htbp]
\begin{minipage}{0.11\linewidth}
  \centerline{\includegraphics[width=1.6cm]{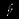}}
\end{minipage}
\hfill
\begin{minipage}{0.11\linewidth}
  \centerline{\includegraphics[width=1.6cm]{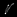}}
\end{minipage}
\hfill
\begin{minipage}{0.11\linewidth}
  \centerline{\includegraphics[width=1.6cm]{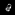}}
\end{minipage}
\hfill
\begin{minipage}{0.11\linewidth}
  \centerline{\includegraphics[width=1.6cm]{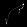}}
\end{minipage}
\hfill
\begin{minipage}{0.11\linewidth}
  \centerline{\includegraphics[width=1.6cm]{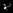}}
\end{minipage}
\hfill
\begin{minipage}{0.11\linewidth}
  \centerline{\includegraphics[width=1.6cm]{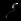}}
\end{minipage}
\hfill
\begin{minipage}{0.11\linewidth}
  \centerline{\includegraphics[width=1.6cm]{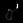}}
\end{minipage}
\hfill
\begin{minipage}{0.11\linewidth}
  \centerline{\includegraphics[width=1.6cm]{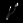}}
\end{minipage}
\centering\caption{Eight kernels from Levin et al. \cite{levin2009understanding}.}
\label{kernel}
\end{figure}

\begin{table}[htbp]
\caption{Average deblurring PSNR and SSIM performance by different methods on Set12 dataset with Levin's $8$ kernels with $\sigma = 12.75$.}\label{tab deblur5}
\begin{center}
\resizebox{1\textwidth}{!}{
\begin{tabular}{cccccccccc}
\toprule
 & kernel1         & kernel2         & kernel3         & kernel4         & {kernel5} & kernel6         & kernel7         & {kernel8} & Average         \\
 \midrule
MMO-PnP-FBS  & 25.61           & 25.44           & 26.39           & 25.22           & 27.24            & 26.72           & 26.45           & 25.69            & 26.10           \\
             & 0.7400          & 0.7363          & 0.7670          & 0.7251          & 0.7870           & 0.7706          & 0.7644          & 0.7430           & 0.7542          \\
NE-PnP-PGD   & 25.92           & 25.74           & 26.57           & 25.49           & 27.45            & 27.11           & 26.68           & 26.04            & 26.37           \\
             & {0.7516} & {0.7467} & {0.7738} & {0.7354} & {0.7974}  & {0.7843} & {0.7766} & 0.7550           & {0.7651} \\
Prox-PnP-DRS & {26.23}  & {26.02}  & {26.78}  & {25.76}  & {27.69}   & {27.31}  & {26.85}  & {25.74}   & {26.55}  \\
             &     0.7468            &  0.7433               & 0.7682                & 0.7314                & 0.7875                 &0.7757                 &0.7649                 &  0.7275                &   0.7557              \\
DPIR         & 27.37           & 27.08           & 27.71           & 26.93           & 28.56            & 28.24           & 27.90           & \textbf{27.31}   & 27.64           \\
             & 0.7895          & 0.7824          & 0.8010          & 0.7783          & 0.8231           & 0.8137          & 0.8084          & \textbf{0.7935}  & 0.7987          \\
PnPI-GD      & 26.58           & 26.36           & 26.96           & 26.10           & 27.85            & 27.54           & 27.31           & 26.58            & 26.91           \\
             & 0.7169          & 0.7249          & 0.7581          & 0.7143          & 0.7673           & 0.7526          & 0.7588          & 0.7342           & 0.7409          \\
PnPI-HQS     & \textbf{27.43}  & \textbf{27.14}  & \textbf{27.75}  & \textbf{26.95}  & \textbf{28.65}   & \textbf{28.33}  & \textbf{27.94}  & 27.29            & \textbf{27.69}  \\
             & \textbf{0.7936} & \textbf{0.7858} & \textbf{0.8048} & \textbf{0.7789} & \textbf{0.8300}  & \textbf{0.8188} & \textbf{0.8109} & 0.7930           & \textbf{0.8020} \\
PnPI-FBS     & 26.63           & 25.99           & 26.70           & 25.81           & 27.89            & 27.84           & 27.19           & 26.27            & 26.79           \\
             & 0.7727          & 0.7552          & 0.7789          & 0.7484          & 0.8121           & 0.8049          & 0.7936          & 0.7663           & 0.7790\\   
\bottomrule
\end{tabular}
}
\end{center}
\end{table}

\begin{table}[htbp]
\caption{Average deblurring PSNR and SSIM performance by different methods on Set12 dataset with Levin's $8$ kernels with $\sigma = 17.85$.}\label{tab deblur7}
\begin{center}
\resizebox{1\textwidth}{!}{
\begin{tabular}{cccccccccc}
\toprule
& kernel1         & kernel2         & kernel3         & kernel4         & kernel5         & kernel6         & kernel7         & kernel8         & Average         \\
\midrule
MMO-PnP-FBS  & 24.95           & 24.82           & 25.76           & 24.51           & 26.32           & 25.80           & 25.55           & 24.92           & 25.33           \\
             & 0.7137          & 0.7113          & 0.7433          & 0.6989          & 0.7514          & 0.7340          & 0.7281          & 0.7128          & 0.7242          \\
NE-PnP-PGD   & 25.18           & 25.03           & 25.89           & 24.72           & 26.49           & 26.12           & 25.76           & 25.22           & 25.55           \\
             & 0.7259          & 0.7218          & 0.7502          & 0.7101          & 0.7619          & 0.7505          & 0.7435          & 0.7266          & 0.7363          \\
Prox-PnP-DRS & 25.25           & 25.04           & 25.88           & 24.56           & 26.68           & 26.19           & 25.66           & 24.81           & 25.51           \\
             &   0.7153    & 0.7101       &    0.7403    &  0.6904      &  0.7545      &  0.7413     &   0.7154     &     0.6760   &  0.7179      \\
DPIR         & 26.17           & 25.97           & 26.66           & 25.73           & {27.33}  & 27.03           & 26.74           & \textbf{26.15}  & 26.48           \\
             & 0.7528          & 0.7483          & 0.7677          & 0.7412          & 0.7877          & 0.7776          & 0.7737          & 0.7566          & 0.7632          \\
PnPI-GD      & 25.61           & 25.46           & 26.13           & 25.10           & 26.83           & 26.54           & 26.18           & 25.54           & 25.92           \\
             & 0.6827          & 0.6940          & 0.7338          & 0.6801          & 0.7399          & 0.7241          & 0.7270          & 0.7093          & 0.7114          \\
PnPI-HQS     & \textbf{26.32}  & \textbf{26.07}  & \textbf{26.81}  & \textbf{25.73}  & \textbf{27.59}  & \textbf{27.14}  & \textbf{26.82}  & 26.07           & \textbf{26.57}  \\
             & \textbf{0.7629} & \textbf{0.7556} & \textbf{0.7796} & \textbf{0.7445} & \textbf{0.8037} & \textbf{0.7886} & \textbf{0.7819} & \textbf{0.7585} & \textbf{0.7719} \\
PnPI-FBS     & 25.85           & 25.37           & 26.01           & 25.07           & 27.03           & 26.80           & 26.38           & 25.62           & 26.01           \\
             & 0.7451          & 0.7328          & 0.7554          & 0.7216          & 0.7842          & 0.7733          & 0.7678          & 0.7439          & 0.7530 \\

\bottomrule
\end{tabular}
}
\end{center}
\end{table}

We summarize the PSNR and SSIM values with $\sigma = 12.75$ in Table \ref{tab deblur5}, and $\sigma = 17.85$ in Table \ref{tab deblur7}. The highest value is marked in \textbf{boldface}. It can be seen that in most cases, PnPI-HQS provides the best PSNR and SSIM values. Compared with the convergent PnP methods MMO-PnP-FBS, NE-PnP-PGD, and Prox-PnP-DRS, the proposed PnPI-GD and PnPI-FBS provides competitive results. It validates the effectiveness of PnP Ishikawa scheme and the pseudo-contractive denoisers. 

In Fig. \ref{485 blur}, we illustrate the results when recovering the image `Starfish' from kernel $8$ and Gaussian noise with $\sigma=12.75$. It can be seen from the enlarged parts that, PnPI-HQS provides the best visual result with sharp edges. Compared with MMO-PnP-FBS, NE-PnP-PGD, and Prox-PnP-DRS in Figs. \ref{485 blur} (b)-(d), results by PnPI-GD and PnPI-FBS have clearer structures.

\begin{figure}[htbp]

\begin{minipage}{0.19\linewidth}
  \centerline{\includegraphics[width=2.59cm]{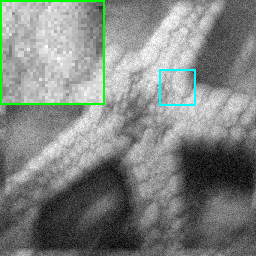}}
  \centerline{(a) Blurred}
\end{minipage}
\hfill
\begin{minipage}{0.19\linewidth}
  \centerline{\includegraphics[width=2.59cm]{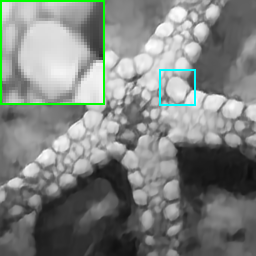}}
  \centerline{(b) MMO-FBS}
\end{minipage}
\hfill
\begin{minipage}{0.19\linewidth}
  \centerline{\includegraphics[width=2.59cm]{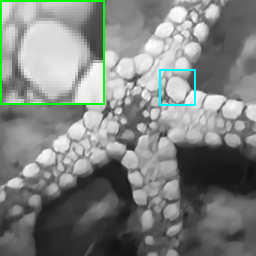}}
  \centerline{(c) NE-PGD}
\end{minipage}
\hfill
\begin{minipage}{0.19\linewidth}
  \centerline{\includegraphics[width=2.59cm]{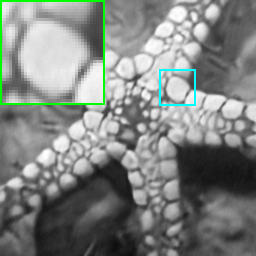}}
  \centerline{(d) Prox-DRS}
\end{minipage}
\hfill
\begin{minipage}{0.19\linewidth}
  \centerline{\includegraphics[width=2.59cm]{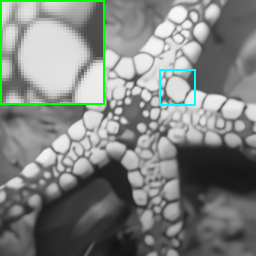}}
  \centerline{(e) DPIR}
\end{minipage}
\hfill
\begin{minipage}{0.19\linewidth}
  \centerline{\includegraphics[width=2.59cm]{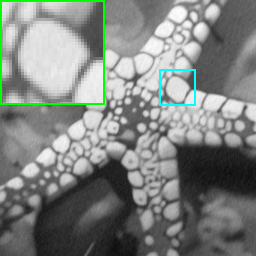}}
  \centerline{(f) PnPI-GD}
\end{minipage}
\hfill
\begin{minipage}{0.19\linewidth}
  \centerline{\includegraphics[width=2.59cm]{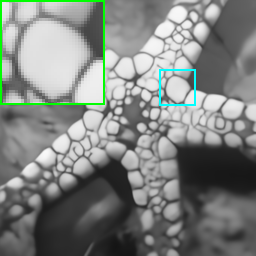}}
  \centerline{(g) PnPI-HQS}
\end{minipage}
\hfill
\begin{minipage}{0.19\linewidth}
  \centerline{\includegraphics[width=2.59cm]{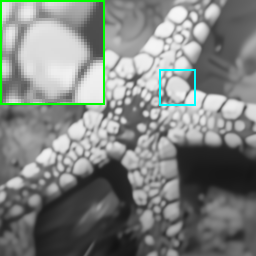}}
  \centerline{(h) PnPI-FBS}
\end{minipage}
\hfill
\begin{minipage}{0.39\linewidth}
  \centerline{\includegraphics[width=5.3cm]{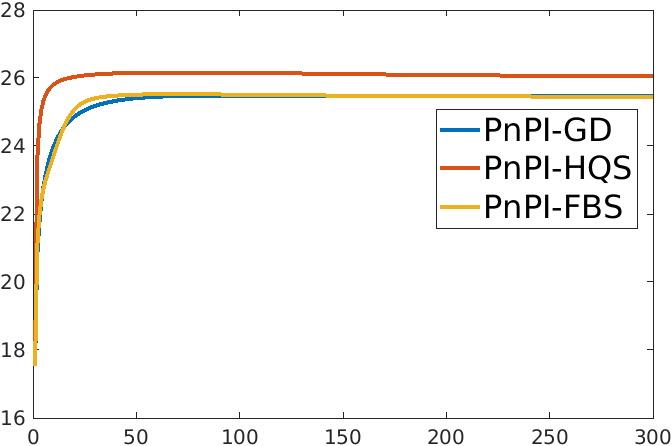}}
  \centerline{(i) }
\end{minipage}
\centering\caption{Results by different methods when recovering the image `Starfish' from kernel $8$ and Gaussian noise with $\sigma=12.75$. (a) Blurred. (b) MMO-FBS, PSNR=$24.64$dB. (c) NE-PGD, PSNR=$24.98$dB. (d) Prox-DRS, PSNR=$24.92$dB. (e) DPIR, PSNR=$25.92$dB. (f) PnPI-GD, PSNR=$25.49$dB. (g) PnPI-HQS, PSNR=$26.03$dB. (h) PnPI-FBS, PSNR=$25.46$dB. (i) PSNR curves by PnPI-GD, PnPI-HQS, and PnPI-FBS.}
\label{485 blur}
\end{figure}

\subsubsection{Single image super-resolution}

In the super-resolution task, we seek to solve the inverse problem (\ref{original inverse problem}) with a convolution operator $K$ performed with circular boundary conditions, a standard $s$-fold downsampling operator $S$, as well as Gaussian noise $n$ with zero mean value and standard derivation $\sigma$. The fidelity term is
\begin{equation}
    G(u;f)=\frac{\mu}{2}\|SKu-f\|^2,
\end{equation}
where $\mu>0$ is the balancing parameter. The proximal operator $Prox_{\frac{G}{\beta}}$ can be efficiently calculated as in \cite{zhang2021plug}.
Similarly to the deblurring task, $\nabla G(u) = \mu S^\mathrm{T}K^\mathrm{T}(Ku- f)$, and $\|S^\mathrm{T}K^\mathrm{T}KS\|_*\le 1$. Therefore, $\nabla G$ is $\mu$-cocoercive. We set $N=150$, and fine tune $\mu$, $\lambda$ and $\beta>0$ in the proposed methods to achieve the best quantitive PSNR values. 

We let the kernel $K$ be the isotropic Gaussian blur kernel with standard deviation $2$. The downsampling scale are set as $s=2,4$. The noise levels are set as $\sigma = 0,2.55,7.65$.

\begin{table}[htbp]
\caption{Average super-resolution PSNR and SSIM performance by different methods on Set12 dataset with different scales and noise levels.}\label{tab sisr}
\begin{center}
\resizebox{0.8\textwidth}{!}{
\begin{tabular}{ccccccc}
\toprule
             & \multicolumn{3}{c}{s=2}                             & \multicolumn{3}{c}{s=4}                             \\ 
             \midrule
         $\sigma$    & 0               & 2.55            & 7.65            & 0               & 2.55            & 7.65            \\
             \midrule
MMO-PnP-FBS  & 27.18           & 26.32           & 25.18           & 25.42           & 25.18           & 24.23           \\
             & 0.8247          & 0.7776          & 0.7162          & 0.7453          & 0.7341          & 0.6915          \\
NE-PnP-PGD   & 27.17           & 26.45           & 25.23           & 25.51           & 25.26           & 24.30           \\
             & 0.8242          & 0.7817          & 0.7315          & 0.7482          & 0.7370          & 0.6955          \\
Prox-PnP-DRS & 31.25           & 26.96           & 25.48           & 25.89           & 25.27           & 24.08           \\
             &      0.9108           &  0.7927               &    0.7366             &    0.7810             &      0.7425           &  0.6831               \\
DPIR         & 30.99           & 27.49           & 25.79           & \textbf{26.56}  & \textbf{25.94}  & 24.42           \\
             & 0.8976          & 0.8082          & 0.7458          & \textbf{0.7945} & \textbf{0.7648} & 0.6984          \\
PnPI-GD      & 27.13           & 26.59           & 25.54           & 25.95           & 25.57           & 24.42           \\
             & 0.8179          & 0.7919          & 0.7468          & 0.7735          & 0.7539          & 0.7021          \\
PnPI-HQS     & \textbf{31.87}  & \textbf{27.52}  & \textbf{25.98}  & 26.29           & 25.72           & \textbf{24.61}  \\
             & \textbf{0.9128} & \textbf{0.8115} & \textbf{0.7584} & 0.7905          & 0.7573          & \textbf{0.7090} \\
PnPI-FBS     & 28.96           & 26.50           & 25.59           & 25.99           & 25.66           & 24.49           \\
             & 0.8767          & 0.7891          & 0.7473          & 0.7678          & 0.7539          & 0.7048   \\  
             \bottomrule
\end{tabular}
}
\end{center}
\end{table}

We summarize the PSNR and SSIM values in Table \ref{tab sisr}. The highest value is marked in \textbf{boldface}. It can be seen that in most cases, PnPI-HQS provides the best PSNR and SSIM values. Compared with the convergent PnP methods MMO-PnP-FBS, NE-PnP-PGD, and Prox-PnP-DRS, the proposed PnPI-GD and PnPI-FBS provides competitive results, especially when the degradation is severe.

In Fig. \ref{fig barbara}, we illustrate the super-resolution results on the image `Barbara' with $s=2,\sigma=0$. It can be seen that, the result by PnPI-GD is smooth with clear edges, while PnPI-FBS recovers some textures. Note that in Figs. \ref{fig barbara} (d) and (g), Prox-PnP-DRS can recover some parts of textures of the tie, but PnPI-HQS can also recover the textures on the trouser.

\begin{figure}[htbp]

\begin{minipage}{0.19\linewidth}
  \centerline{\includegraphics[width=2.59cm]{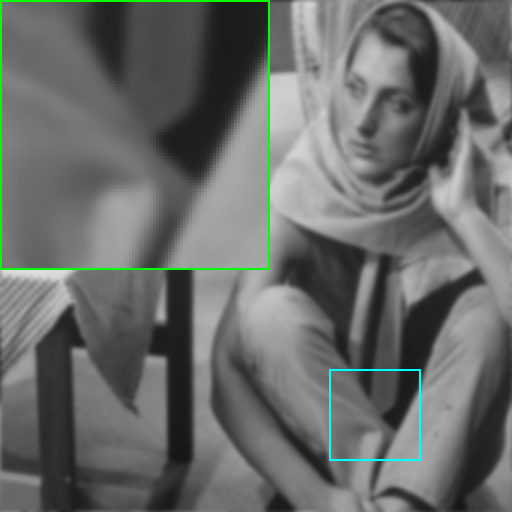}}
  \centerline{(a) LR}
\end{minipage}
\hfill
\begin{minipage}{0.19\linewidth}
  \centerline{\includegraphics[width=2.59cm]{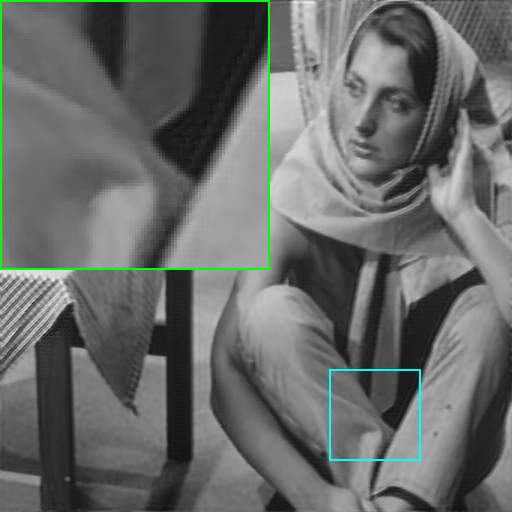}}
  \centerline{(b) MMO-FBS}
\end{minipage}
\hfill
\begin{minipage}{0.19\linewidth}
  \centerline{\includegraphics[width=2.59cm]{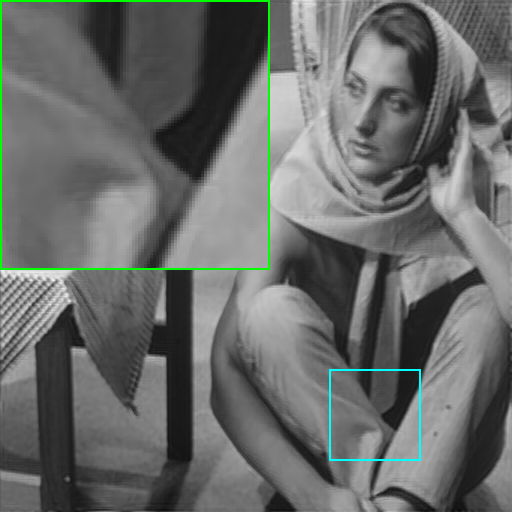}}
  \centerline{(c) NE-PGD}
\end{minipage}
\hfill
\begin{minipage}{0.19\linewidth}
  \centerline{\includegraphics[width=2.59cm]{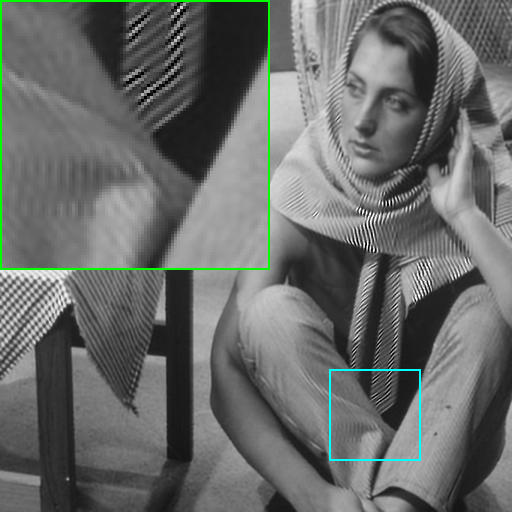}}
  \centerline{(d) Prox-DRS}
\end{minipage}
\hfill
\begin{minipage}{0.19\linewidth}
  \centerline{\includegraphics[width=2.59cm]{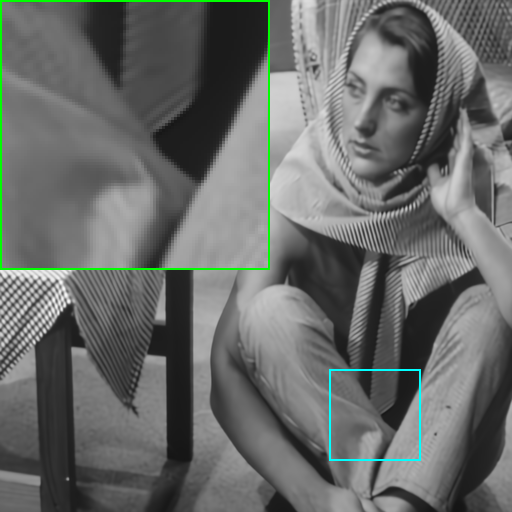}}
  \centerline{(e) DPIR}
\end{minipage}
\hfill
\begin{minipage}{0.19\linewidth}
  \centerline{\includegraphics[width=2.59cm]{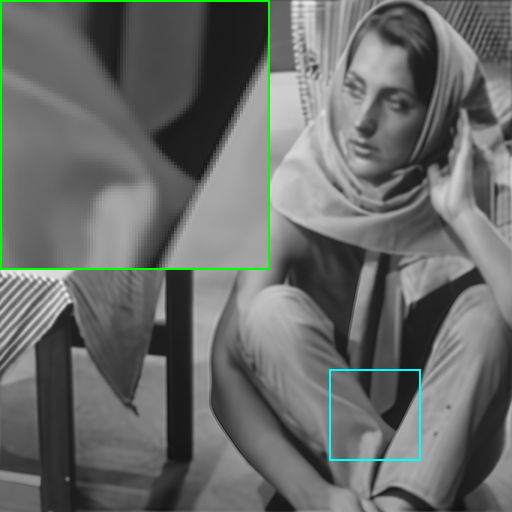}}
  \centerline{(f) PnPI-GD}
\end{minipage}
\hfill
\begin{minipage}{0.19\linewidth}
  \centerline{\includegraphics[width=2.59cm]{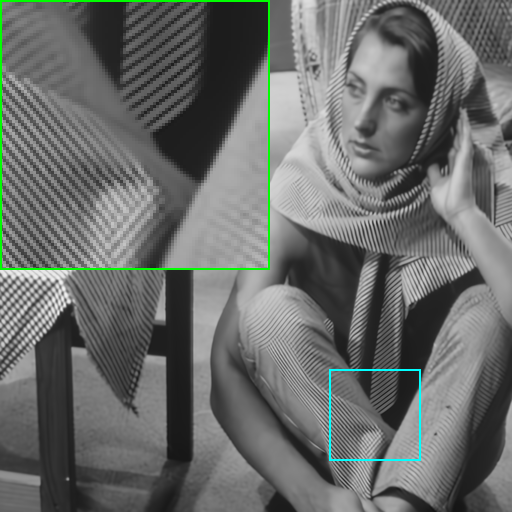}}
  \centerline{(g) PnPI-HQS}
\end{minipage}
\hfill
\begin{minipage}{0.19\linewidth}
  \centerline{\includegraphics[width=2.59cm]{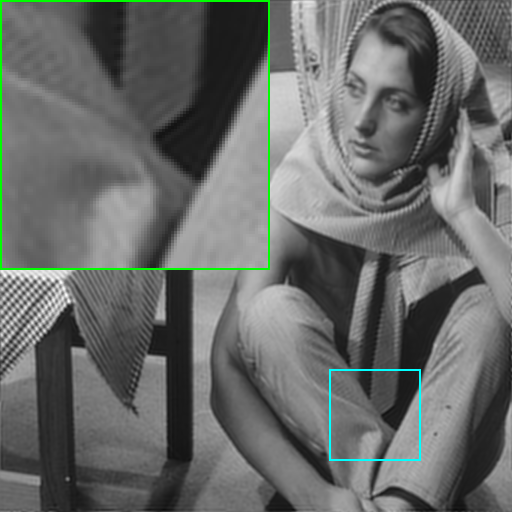}}
  \centerline{(h) PnPI-FBS}
\end{minipage}
\hfill
\begin{minipage}{0.39\linewidth}
  \centerline{\includegraphics[width=5.3cm]{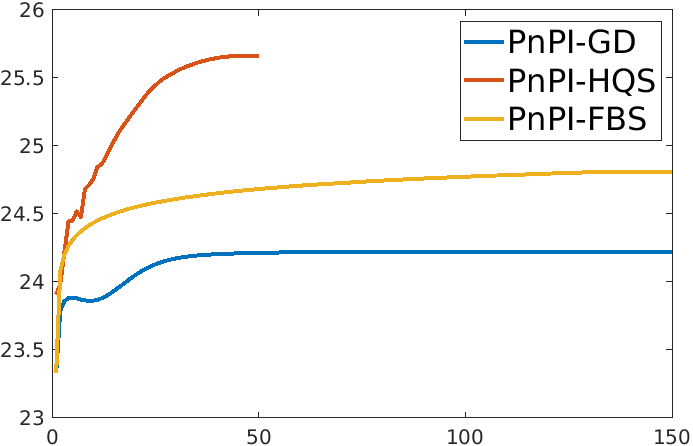}}
  \centerline{(i) }
\end{minipage}
\centering\caption{Super-resolution results by different methods on the image `Barbara' with $s=2,\sigma=0$. (a) Low-resolution (LR). (b) MMO-FBS, PSNR=$24.21$dB. (c) NE-PGD, PSNR=$24.24$dB. (d) Prox-DRS, PSNR=$24.81$dB. (e) DPIR, PSNR=$25.01$dB. (f) PnPI-GD, PSNR=$24.23$dB. (g) PnPI-HQS, PSNR=$25.66$dB. (h) PnPI-FBS, PSNR=$24.83$dB. (i) PSNR curves by PnPI-GD, PnPI-HQS, and PnPI-FBS.}
\label{fig barbara}
\end{figure}

\subsubsection{Poisson denoising}
In the Poisson noise removal task, we seek to solve the inverse problem (\ref{original inverse problem}) with $K$ be the identity operator and Poisson noise, that is 
\begin{equation}
    f\sim \displaystyle \frac{Poisson(u\times peak)}{peak}, 
\end{equation}
where $peak>0$ determines the noise level. A large $peak$ corresponds to a low noise level. Note that in this case, the gray value interval of $u$ is $[0,1]$. The fidelity term is 
\begin{equation}
    G(u;f) = \mu\langle u-f\log u,\textbf{1}\rangle.
\end{equation}
The proximal operator $Prox_{\frac{G}{\beta}}$ can be solved according to \cite{kumar2019low}. Since $\nabla G(u) = \textbf{1}-\frac{f}{u}$ is not cocoercive, gradient-based methods are not guaranteed to converge, such as MMO-PnP-FBS, NE-PnP-PGD, PnPI-GD, and PnPI-FBS. However, we still compare these methods when removing Poisson noises. Although PnPI-GD and PnPI-FBS are not guaranteed to converge, we observe in experiments that both algorithms converge efficiently. We set $N=100$, and fine tune $\mu,\lambda,\beta$. 

The overall PSNR and SSIM values are listed in Table \ref{tab poisson}. The highest value is marked in \textbf{boldface}. It can be seen that in most cases, PnPI-FBS provides the best PSNR values, while PnPI-HQS has the best SSIM values. Compared with the state-of-the-art PnP methods, the proposed PnPI-GD, PnPI-HQS, and PnPI-FBS provides competitive results.

In Fig. \ref{fig 20 poisson}, we show the Poisson noise removal results on the image `Lena' with $peak=20$. In Fig. \ref{fig 20 poisson} (a), the enlarged part is severely degraded. The methods in Figs. \ref{fig 20 poisson} (b)-(e) can recover some textures. Note that in Figs. \ref{fig 20 poisson} (f)-(h), the proposed methods restore finer textures, with less noise residuals.

\begin{table}[htbp]
\caption{Average Poisson denoising PSNR and SSIM performance by different methods on Set12 dataset with different peaks.}\label{tab poisson}

\begin{center}
\resizebox{0.8\textwidth}{!}{
\begin{tabular}{ccccccc}
\toprule
             & \multicolumn{2}{c}{peak=10}          & \multicolumn{2}{c}{peak=15}         & \multicolumn{2}{c}{peak=20}         \\
            \toprule
            & PSNR & SSIM & PSNR & SSIM & PSNR &SSIM\\
             \midrule
MMO-FBS  & 24.75   & 0.7102        & 25.98   & 0.7486        & 26.67   & 0.7642        \\
NE-PGD   & 25.57   & 0.7293        & 26.50    & 0.7634        & 27.20  & 0.7892       \\
Prox-DRS & 26.27    &     0.7468       & 26.71  &   0.7682         & 27.21    & 0.7834       \\
DPIR         & 25.97   & 0.7316      & 27.09 & 0.7923         & 27.87     & 0.8180    \\
PnPI-GD      & 25.90    & 0.7270        & 27.03       & 0.7567    & 27.53      & 0.7797      \\
PnPI-HQS     & 25.82   & \textbf{0.7796}       & 27.20   & \textbf{0.8043}         & 28.11  & \textbf{0.8212}         \\
PnPI-FBS     & \textbf{26.80} & 0.7689  & \textbf{27.58} & 0.7827 & \textbf{28.16} & 0.8010\\
             \bottomrule
\end{tabular}
}
\end{center}

\end{table}

\begin{figure}[htbp]

\begin{minipage}{0.19\linewidth}
  \centerline{\includegraphics[width=2.59cm]{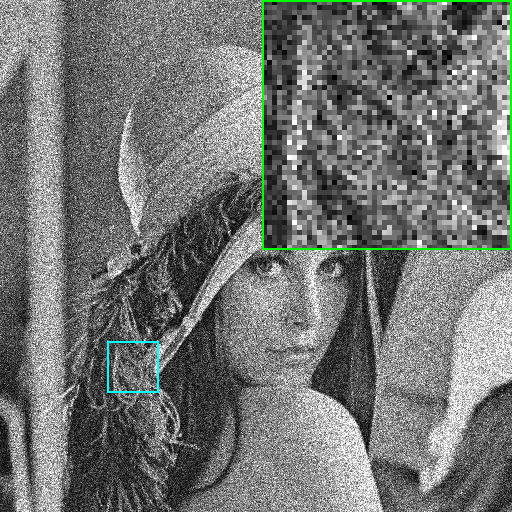}}
  \centerline{(a) Noisy}
\end{minipage}
\hfill
\begin{minipage}{0.19\linewidth}
  \centerline{\includegraphics[width=2.59cm]{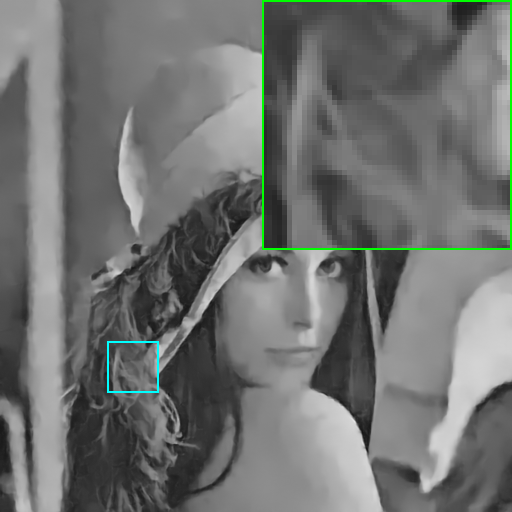}}
  \centerline{(b) MMO-FBS}
\end{minipage}
\hfill
\begin{minipage}{0.19\linewidth}
  \centerline{\includegraphics[width=2.59cm]{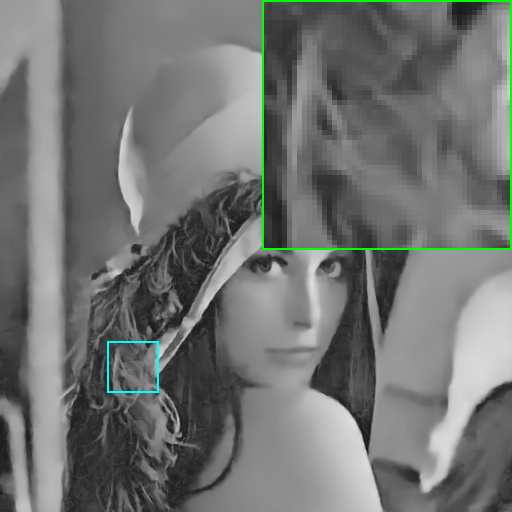}}
  \centerline{(c) NE-PGD}
\end{minipage}
\hfill
\begin{minipage}{0.19\linewidth}
  \centerline{\includegraphics[width=2.59cm]{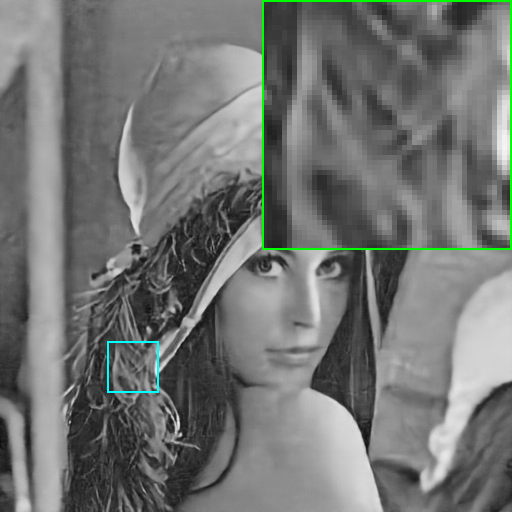}}
  \centerline{(d) Prox-DRS}
\end{minipage}
\hfill
\begin{minipage}{0.19\linewidth}
  \centerline{\includegraphics[width=2.59cm]{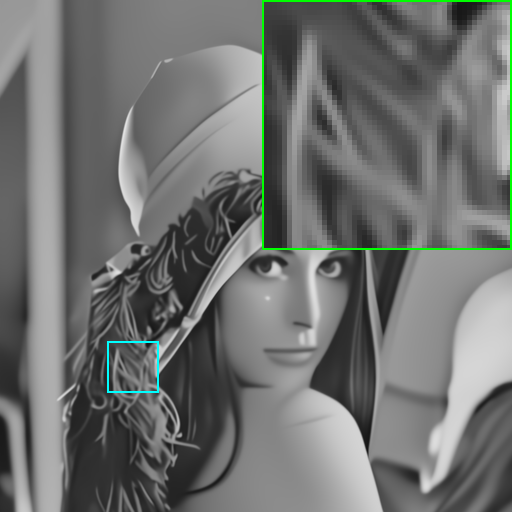}}
  \centerline{(e) DPIR}
\end{minipage}
\hfill
\begin{minipage}{0.19\linewidth}
  \centerline{\includegraphics[width=2.59cm]{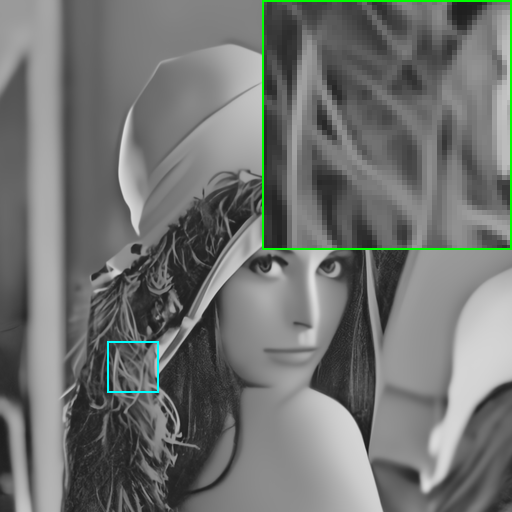}}
  \centerline{(f) PnPI-GD}
\end{minipage}
\hfill
\begin{minipage}{0.19\linewidth}
  \centerline{\includegraphics[width=2.59cm]{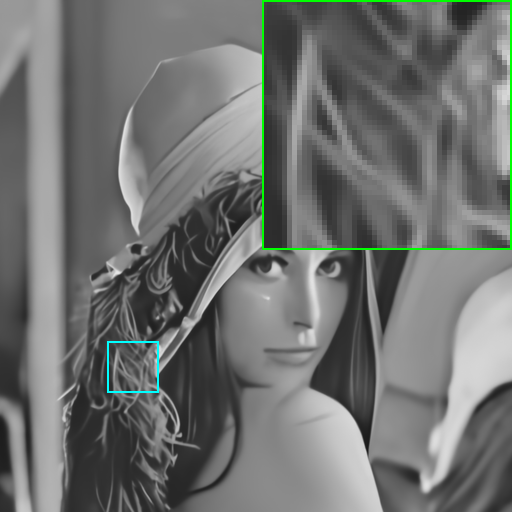}}
  \centerline{(g) PnPI-HQS}
\end{minipage}
\hfill
\begin{minipage}{0.19\linewidth}
  \centerline{\includegraphics[width=2.59cm]{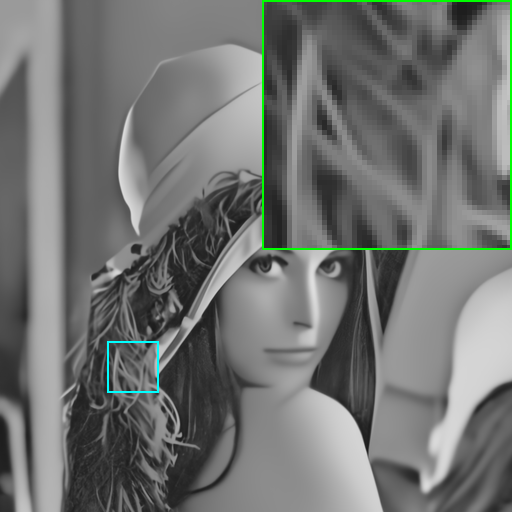}}
  \centerline{(h) PnPI-FBS}
\end{minipage}
\hfill
\begin{minipage}{0.39\linewidth}
  \centerline{\includegraphics[width=5.3cm]{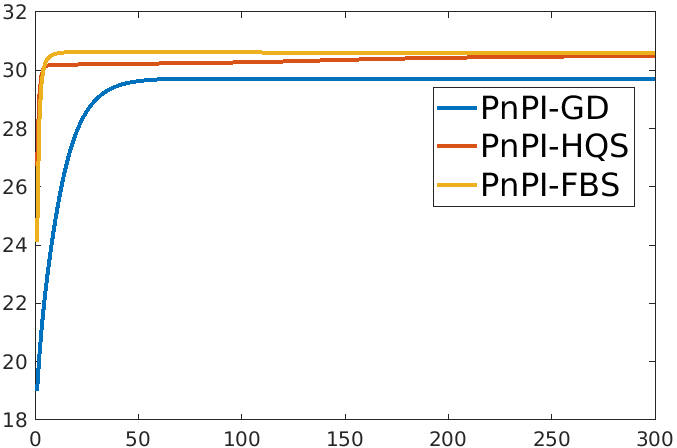}}
  \centerline{(i) }
\end{minipage}
\centering\caption{Results by different methods when recovering the image `Lena' from Poisson noise (peak=20). (a) Noisy. (b) MMO-FBS, PSNR=$29.39$dB. (c) NE-PGD, PSNR=$29.84$dB. (d) Prox-DRS, PSNR=$30.24$dB. (e) DPIR, PSNR=$30.14$dB. (f) PnPI-GD, PSNR=$29.73$dB. (g) PnPI-HQS, PSNR=$30.43$dB. (h) PnPI-FBS, PSNR=$30.82$dB. (i) PSNR curves by PnPI-GD, PnPI-HQS, and PnPI-FBS.}
\label{fig 20 poisson}
\end{figure}

\section{Conclusion}\label{sec 6}

This paper introduces a novel training strategy that enforces a weaker constraint on the deep denoiser called pseudo-contractiveness. By studying the spectrum of the Jacobian matrix, we uncover relationships between different denoiser assumptions. Utilizing the Ishikawa process, efficient fixed-point algorithms are derived. The proposed algorithms demonstrate strong theoretical convergence towards a fixed point. To enforce the pseudo-contractive denoiser assumption, a training strategy based on holomorphic transformation and functional calculi is proposed. Extensive experiments showcase the superior performance of the pseudo-contractive denoiser compared to other related denoisers, both visually and quantitatively. Overall, the proposed methods offer competitive results for image restoration tasks.

\section*{Appendix}\label{appendix}

\section*{Proofs to Lemmas and Theorems}
Before the proofs, we review Lemma \ref{lemma 5} from \cite{giselsson2017tight}.
\begin{lemma}\label{lemma 5}
Let $G$ be proper, closed, and convex, $\nabla G$ is $\gamma$-cocoercive, that is for any $x,y\in V$, there holds
\begin{equation}
    \langle x-y,\nabla G(x)-\nabla(y)\rangle \ge \gamma\|\nabla G(x)-\nabla G(y)\|^2.
\end{equation}
Then, the resolvent of $\nabla G$, which is the proximal operator $P=Prox_G=(I+\nabla G)^{-1}$, is $\frac{1}{2\gamma+2}$-averaged. 
\end{lemma}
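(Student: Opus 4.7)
The plan is to unpack $\theta$-averagedness for $\theta = \frac{1}{2\gamma+2}$ using the standard equivalent inequality characterization: a mapping $P$ is $\theta$-averaged if and only if
$$\|Px-Py\|^2 + \frac{1-\theta}{\theta}\,\|(I-P)x-(I-P)y\|^2 \le \|x-y\|^2, \quad \forall x,y\in V.$$
With the prescribed value $\theta = \frac{1}{2\gamma+2}$, one has $\frac{1-\theta}{\theta} = 2\gamma+1$, so the target reduces to verifying
$$\|Px-Py\|^2 + (2\gamma+1)\,\|(I-P)x-(I-P)y\|^2 \le \|x-y\|^2$$
for $P = (I+\nabla G)^{-1}$. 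Note that $P$ is single-valued with full domain, since $\nabla G$ is maximally monotone under the hypotheses on $G$.

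Next, I would perform the natural change of variables $u = Px$ and $v = Py$, so that $x = u + \nabla G(u)$ and $y = v + \nabla G(v)$. This gives $(I-P)x-(I-P)y = \nabla G(u)-\nabla G(v)$, and expanding the squared norm yields
$$\|x-y\|^2 = \|u-v\|^2 + 2\langle u-v,\, \nabla G(u)-\nabla G(v)\rangle + \|\nabla G(u)-\nabla G(v)\|^2.$$

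Substituting these two identities into the target inequality and cancelling the common $\|u-v\|^2$ on both sides, the claim collapses to
$$\gamma\,\|\nabla G(u)-\nabla G(v)\|^2 \le \langle u-v,\, \nabla G(u)-\nabla G(v)\rangle,$$
which is precisely the $\gamma$-cocoercivity hypothesis on $\nabla G$. Since $u,v$ range over all of $V$ as $x,y$ do (by surjectivity of $I+\nabla G$), this finishes the proof. There is no serious obstacle; the only care needed is algebraic bookkeeping in the cancellation step and a brief justification that the resolvent is well-defined and has full domain, which is a classical Minty-type consequence of the maximal monotonicity of $\nabla G$.
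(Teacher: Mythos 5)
Your proof is correct. Note, however, that the paper does not prove this lemma at all: it is stated as a quoted result from Giselsson's work on tight averagedness/contraction factors for Douglas--Rachford splitting (reference \cite{giselsson2017tight}), so there is no in-paper argument to compare against. Your argument supplies a valid self-contained proof: the characterization of $\theta$-averagedness via $\|Px-Py\|^2+\frac{1-\theta}{\theta}\|(I-P)x-(I-P)y\|^2\le\|x-y\|^2$ is the standard one (and is exactly the form the paper itself invokes inside the proof of Lemma \ref{lemma 4}), the value $\theta=\frac{1}{2\gamma+2}$ gives $\frac{1-\theta}{\theta}=2\gamma+1$, and the substitution $x=u+\nabla G(u)$, $y=v+\nabla G(v)$ reduces the inequality, after expanding $\|x-y\|^2$ and cancelling $\|u-v\|^2$ and one copy of $\|\nabla G(u)-\nabla G(v)\|^2$, precisely to the $\gamma$-cocoercivity hypothesis. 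Your side remarks are also in order: single-valuedness and full domain of $(I+\nabla G)^{-1}$ follow from maximal monotonicity of $\nabla G$ (Minty), and since $u,v$ range over all of $V$, the averagedness inequality holds for all $x,y$. The case $\gamma=0$ consistently recovers firm non-expansiveness ($\theta=\tfrac12$), matching Remark \ref{remark1}. In short: correct, and essentially the same mechanism as the cited source, here written out explicitly rather than referenced.
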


\section*{Proof to Lemma \ref{lemma 1}}\label{appendix lemma1}
\begin{proof}
  By the definition, $D$ is said to be $k$-strictly pseudo-contractive with $k<1$, if $\forall x,y\in V$, we have
\begin{equation}
\|D(x)-D(y)\|^2\le \|x-y\|^2+k\|(I-D)(x)-(I-D)(y)\|^2.
\end{equation}
Denote $a = D(x)-D(y), b=x-y$. Then 
\begin{equation}
    \begin{array}{ll}
        \|a\|^2 &\le \|b\|^2+k\|a-b\|^2  = \|b\|^2+k\|a\|^2+k\|b\|^2-2k\langle a,b\rangle,\vspace{1ex}\\
         (1-k)\|a\|^2+2k\langle a,b\rangle&\le (1+k)\|b\|^2,\vspace{1ex}\\
\displaystyle         \|a\|^2+\frac{2k}{1-k}\langle a,b\rangle&\le \displaystyle \frac{1+k}{1-k}\|b\|^2,\vspace{1ex}\\
         \left\|a+\displaystyle\frac{k}{1-k}b\right\|^2&\le\left(\displaystyle\frac{k^2}{(1-k)^2}+\frac{1+k}{1-k}\right)\|b\|^2 =\left(\displaystyle\frac{1}{(1-k)^2}\right)\|b\|^2,\vspace{1ex}\\
         \|(1-k)a+kb\|^2&\le\|b\|,
    \end{array}
\end{equation}
which means that $(1-k)D+kI$ is non-expansive. Let $N=(1-k)D+kI$, we have
\begin{equation}
    D = \frac{1}{1-k}N-\frac{k}{1-k}I.
\end{equation}
\end{proof}

\section*{Proof to Lemma \ref{lemma 1.5}}\label{appendix lemma1.5}
\begin{proof}
Lemma \ref{lemma 1.5} is a straight conclusion of (\ref{tmp 1245}).
\end{proof}

\section*{Proof to Lemma \ref{lemma 6}}\label{appendix lemma6}
\begin{proof}
According to Lemma \ref{lemma 1.5}, we only need to show that for any $x,y\in V$, there holds
\begin{equation}
    \langle (I-T)(x)-(I-T)(y),x-y\rangle\ge 0.
\end{equation}
Note that $I-T=I-D+\nabla G$. Thus we have 
\begin{equation}
\begin{array}{ll}
&\langle (I-T)(x)-(I-T)(y),x-y\rangle\\
=&\langle (I-D)(x)-(I-D)(y),x-y\rangle + \langle \nabla G(x)-\nabla G(y),x-y\rangle \\
\ge & 0+0=0.
\end{array}
\end{equation}
The last $\ge$ comes from the pseudo-contractive $D$ and convex $G$.
\end{proof}

\section*{Proof to Lemma \ref{lemma 4}}\label{appendix lemma4}
Before the proof to Lemma \ref{lemma 4}, we give the following Lemma \ref{lemma 3}.
\begin{lemma}\label{lemma 3}
Let $V$ be the real Hilbert space. For any $x,y\in V$ and $\alpha,\beta\in \mathbb{R}$, there holds
\begin{equation}
\|\alpha x+\beta y\|^2=\alpha(\alpha+\beta)\|x\|^2+\beta(\alpha+\beta)\|y\|^2-\alpha\beta\|x-y\|^2,    
\end{equation}
and 
\begin{equation}
    \alpha\beta\|x+y\|^2=\alpha(\alpha+\beta)\|x\|^2+\beta(\alpha+\beta)\|y\|^2-\|\alpha x-\beta y\|^2.
\end{equation}
\end{lemma}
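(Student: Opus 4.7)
The plan is direct verification by expanding every squared norm via the inner product and matching coefficients. In a real Hilbert space one has $\|u+v\|^2 = \|u\|^2 + 2\langle u,v\rangle + \|v\|^2$, so both sides of each identity are polynomials in $\alpha$, $\beta$ and in the three scalars $\|x\|^2$, $\|y\|^2$, $\langle x,y\rangle$. Showing the stated equality therefore reduces to checking that the coefficient of each of these three scalar quantities agrees on both sides.

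For the first identity, I would first write out $\|\alpha x+\beta y\|^2 = \alpha^2\|x\|^2 + 2\alpha\beta\langle x,y\rangle + \beta^2\|y\|^2$ and separately $\|x-y\|^2 = \|x\|^2 - 2\langle x,y\rangle + \|y\|^2$. Substituting the latter into the right-hand side $\alpha(\alpha+\beta)\|x\|^2+\beta(\alpha+\beta)\|y\|^2-\alpha\beta\|x-y\|^2$ and collecting terms, the coefficient of $\|x\|^2$ becomes $\alpha^2+\alpha\beta-\alpha\beta = \alpha^2$, the coefficient of $\|y\|^2$ becomes $\alpha\beta+\beta^2-\alpha\beta = \beta^2$, and the coefficient of $\langle x,y\rangle$ becomes $2\alpha\beta$. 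This matches the expansion of the left-hand side term by term.

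For the second identity, the strategy is analogous: expand $\alpha\beta\|x+y\|^2 = \alpha\beta\|x\|^2 + 2\alpha\beta\langle x,y\rangle + \alpha\beta\|y\|^2$ on the left, and on the right expand $\|\alpha x-\beta y\|^2 = \alpha^2\|x\|^2 - 2\alpha\beta\langle x,y\rangle + \beta^2\|y\|^2$. Then $\alpha(\alpha+\beta)\|x\|^2+\beta(\alpha+\beta)\|y\|^2-\|\alpha x-\beta y\|^2$ simplifies by coefficient matching to the same three-term expression on the left.

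There is essentially no obstacle here beyond careful bookkeeping; both identities are two-variable generalizations of the classical parallelogram law (indeed, setting $\alpha=\beta=1$ in the first identity recovers $\|x+y\|^2+\|x-y\|^2 = 2\|x\|^2+2\|y\|^2$). I would present the proof as two short calculations, one per identity, writing out the grouped coefficients so that equality is visible by inspection rather than invoking any deeper structure.
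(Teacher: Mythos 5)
Your proposal is correct and follows essentially the same route as the paper: direct expansion of all squared norms via the inner product and matching the coefficients of $\|x\|^2$, $\|y\|^2$, and $\langle x,y\rangle$. The only cosmetic difference is that the paper deduces the second identity from the first by the substitution $y\mapsto -y$, whereas you expand it directly, which is an equally valid and equally short calculation.
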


\begin{proof}
Here we only prove the first equality. By letting $x'=x,y'=-y$, the second equality holds naturally. The left hand side equals to
\begin{equation}
\begin{array}{ll}
    LHS=\alpha^2\|x\|^2+\beta^2\|y\|^2+2\alpha\beta\langle x,y\rangle,
\end{array}
\end{equation}
while the right hand side equals to
\begin{equation}
\begin{array}{ll}
    RHS&=\alpha^2\|x\|^2+\beta^2\|y\|^2+\alpha\beta(\|x\|^2+\|y\|^2)-\alpha\beta(\|x\|^2+\|y\|^2-2\langle x,y\rangle)\\
    &=\alpha^2\|x\|^2+\beta^2\|y\|^2+2\alpha\beta\langle x,y\rangle=LHS.
\end{array}
\end{equation}
\end{proof}

Now we can prove Lemma \ref{lemma 4}.
\begin{proof}
Since $D$ is $k$-strictly pseudo-contractive, and that $P$ is $\theta$-averaged, for any $x,y\in V$ we have 
    \begin{equation}
        \begin{array}{ll}
            &\|D\circ P(x)-D\circ P(y)\|^2\\
            \le & \|P(x)-P(y)\|^2+k\|(I-D)\circ P(x)-(I-D)\circ P(y)\|^2\\
            \le &\|x-y\|^2 - \frac{1-\theta}{\theta}\|(I-P) (x)-(I-P) (y)\|^2\\ &+k\|(I-D)\circ P(x)-(I-D)\circ P(y)\|^2.\\
        \end{array}
    \end{equation}
    Set
    \begin{equation}
        \alpha = -\frac{1-\theta}{\theta}, \beta = k, l=\displaystyle\frac{\alpha\beta}{\alpha+\beta}=\frac{k\frac{1-\theta}{\theta}}{\frac{1-\theta}{\theta}-k}=\frac{k(1-\theta)}{(1-\theta)-k\theta}.
    \end{equation}
By Lemma \ref{lemma 3}, there holds
    \begin{equation}
        \begin{array}{ll}
        &\alpha\|(I-P) (x)-(I-P) (y)\|^2+\beta\|(I-D)\circ P(x)-(I-D)\circ P(y)\|^2\vspace{0.5ex}\\
        =&\displaystyle\frac{\alpha\beta}{\alpha+\beta}\|[(I-P)+(I-D)\circ P](x)-[(I-P)+(I-D)\circ P](y)\|^2\vspace{0.5ex}\\
        &+\displaystyle\frac{1}{\alpha+\beta}\|\alpha(I-D)\circ P(x)-\alpha(I-D)\circ P(y) - \beta (I-P)(x)+\beta(I-P)(y)\|^2\vspace{0.5ex}\\
        =&\displaystyle\frac{\alpha\beta}{\alpha+\beta}\|(I-D\circ P)(x)-(I-D\circ P)(y)\|^2\vspace{0.5ex}\\
        &+\displaystyle\frac{1}{\alpha+\beta}\|\alpha(I-D)\circ P(x)-\alpha(I-D)\circ P(y) - \beta (I-P)(x)+\beta(I-P)(y)\|^2.
        \end{array}
    \end{equation}
When $k\le1-\theta$, $\alpha+\beta<0$, and thus
    \begin{equation}
        \begin{array}{ll}
            &\alpha\|(I-P) (x)-(I-P) (y)\|^2+\beta\|(I-D)\circ P(x)-(I-D)\circ P(y)\|^2\vspace{0.5ex}\\
          \le&\displaystyle\frac{\alpha\beta}{\alpha+\beta}\|(I-D\circ P)(x)-(I-D\circ P)(y)\|^2\\
            =&l\|(I-D\circ P)(x)-(I-D\circ P)(y)\|^2.
        \end{array}
    \end{equation}
If $k=1-\theta$, $l=1$. This completes the proof.
\end{proof}

\section*{Proof to Theorem \ref{thm 1}}\label{appendix theorem1}
\begin{proof}
By Lemma \ref{lemma 6}, $T=D_\beta-\nabla G$ is Lipschitz and pseudo-contractive. Therefore, according to Ishikawa's Theorem \cite{ishikawa1974fixed}, PnPI-GD converges strongly in $Fix(D_\beta-\nabla G)$.
\end{proof}

\section*{Proof to Theorem \ref{thm 2}}\label{appendix theorem2}
\begin{proof}
By Lemma \ref{lemma 5}, since $\nabla G$ is $\gamma$-cocoercive, the proximal operator $Prox_{\frac{G}{\beta}}$ is $\frac{1}{2\gamma+2}$-averaged. Since $k<\frac{2\gamma+1}{2\gamma +2}=1-\frac{1}{2\gamma+2}$, by Lemma \ref{lemma 4}, $T=D_\beta\circ Prox_{\frac{G}{\beta}}$ is $l$-strictly pseudo-contractive, where 
\begin{equation}
    0\le l=\displaystyle\frac{k(1-\frac{1}{2\gamma+2})}{(1-\frac{1}{2\gamma+2})-k\frac{1}{2\gamma+2}}=\frac{k(2\gamma+1)}{2\gamma+1-k}<1.
\end{equation}
$D_\beta$ is $k$-strictly pseudo-contractive, and thus, $D_\beta$ is $\frac{1+k}{1-k}$-Lipschitz. Since $Prox_{\frac{G}{\beta}}$ is $1$-Lipschitz, $T=D_\beta\circ Prox_{\frac{G}{\beta}}$ is also Lipschitz. Therefore, $T$ is Lipschitz and pseudo-contractive.
According to Ishikawa's Theorem \cite{ishikawa1974fixed}, when $Fix(D_\beta\circ Prox_{\frac{G}{\beta}})\neq\emptyset$, PnPI-HQS converges strongly in $Fix(D_\beta\circ Prox_{\frac{G}{\beta}})$.
\end{proof}

\section*{Proof to Theorem \ref{thm 3}}\label{appendix theorem3}
\begin{proof}
$\nabla G$ is $\gamma$-cocoercive. After some derivations, we have that for any $x,y\in V$, 
\begin{equation}
\begin{array}{ll}
         &\langle x-y,\nabla G(x)-\nabla G(y) \rangle-\gamma \|\nabla G(x)-\nabla G(y)\|^2 \\ 
         \iff & \|(2\gamma\nabla G-I)(x)-(2\gamma\nabla G-I)(y)\|^2 \le \|x-y\|^2.
\end{array}
\end{equation}
It means that $2\gamma\nabla G-I$ is non-expansive. For $0\le \lambda \le 2\gamma$, 
\begin{equation}
    I-\lambda \nabla G =\displaystyle\left(1-\frac{\lambda}{2\gamma}\right)I+\frac{\lambda}{2\gamma}(I-2\gamma\nabla G).
\end{equation}
Therefore, $I-\lambda\nabla G$ is $\frac{\lambda}{2\gamma}$-averaged. By Lemma \ref{lemma 5}, when $D_\beta$ is $k$-strictly pseudo-contractive, $D_\beta\circ (I-\lambda\nabla G)$ is $l$-strictly pseudo-contractive, where 
\begin{equation}
    0\le l=\displaystyle\frac{k(1-\frac{\lambda}{2\gamma})}{(1-\frac{\lambda}{2\gamma})-k\frac{\lambda}{2\gamma}}=\frac{k(2\gamma-\lambda)}{2\gamma-\lambda-k\lambda}<1,
\end{equation}
if $k\le 1-\frac{\lambda}{2\gamma}$. Since $D_\beta$ is $k$-strictly pseudo-contractive, it is $\frac{1+k}{1-k}$-Lipschitz. Then $T=D_\beta\circ (I-\lambda\nabla G)$ is Lipschitz. Under the assumption that $Fix(T)\neq \emptyset$, Ishikawa's Theorem \cite{ishikawa1974fixed} guarantees the strong convergence of PnPI-FBS in $Fix(T)$.

\end{proof}

\section*{Acknowledgments}
We would like to acknowledge the assistance of volunteers in putting
together this example manuscript and supplement.


\begin{thebibliography}{00}

\bibitem{sreehari2016plug}
S.~Sreehari, S.~V. Venkatakrishnan, B.~Wohlberg, G.~T. Buzzard, L.~F. Drummy,
  J.~P. Simmons, and C.~A. Bouman, ``Plug-and-play priors for bright field
  electron tomography and sparse interpolation,'' \emph{IEEE Transactions on
  Computational Imaging}, vol.~2, no.~4, pp. 408--423, 2016.

\bibitem{sauer1992bayesian}
K.~Sauer and C.~Bouman, ``Bayesian estimation of transmission tomograms using
  segmentation based optimization,'' \emph{IEEE Transactions on Nuclear
  Science}, vol.~39, no.~4, pp. 1144--1152, 1992.

\bibitem{rus1979principles}
I.~Rus, ``Principles and applications of the fixed point theory,''
  \emph{Editura Dacia, Cluj-Napoca}, 1979.

\bibitem{kumar2019low}
P.~G. Kumar and R.~Ranjan~Sahay, ``Low rank poisson denoising (lrpd): A low
  rank approach using split bregman algorithm for poisson noise removal from
  images,'' in \emph{Proceedings of the IEEE/CVF Conference on Computer Vision
  and Pattern Recognition Workshops}, 2019, pp. 0--0.

\bibitem{levin2009understanding}
A.~Levin, Y.~Weiss, F.~Durand, and W.~T. Freeman, ``Understanding and
  evaluating blind deconvolution algorithms,'' in \emph{2009 IEEE conference on
  computer vision and pattern recognition}.\hskip 1em plus 0.5em minus
  0.4em\relax IEEE, 2009, pp. 1964--1971.

\bibitem{pan2016l_0}
J.~Pan, Z.~Hu, Z.~Su, and M.-H. Yang, ``$ l\_0 $-regularized intensity and
  gradient prior for deblurring text images and beyond,'' \emph{IEEE
  transactions on pattern analysis and machine intelligence}, vol.~39, no.~2,
  pp. 342--355, 2016.

\bibitem{liu2021recovery}
J.~Liu, S.~Asif, B.~Wohlberg, and U.~Kamilov, ``Recovery analysis for
  plug-and-play priors using the restricted eigenvalue condition,''
  \emph{Advances in Neural Information Processing Systems}, vol.~34, pp.
  5921--5933, 2021.

\bibitem{zhang2018ffdnet}
K.~Zhang, W.~Zuo, and L.~Zhang, ``Ffdnet: Toward a fast and flexible solution
  for cnn-based image denoising,'' \emph{IEEE Transactions on Image
  Processing}, vol.~27, no.~9, pp. 4608--4622, 2018.

\bibitem{frankel2015splitting}
P.~Frankel, G.~Garrigos, and J.~Peypouquet, ``Splitting methods with variable
  metric for kurdyka--{\l}ojasiewicz functions and general convergence rates,''
  \emph{Journal of Optimization Theory and Applications}, vol. 165, pp.
  874--900, 2015.

\bibitem{chidume1987iterative}
C.~Chidume, ``Iterative approximation of fixed points of lipschitzian strictly
  pseudocontractive mappings,'' \emph{Proceedings of the American Mathematical
  Society}, vol.~99, no.~2, pp. 283--288, 1987.

\bibitem{ishikawa1974fixed}
S.~Ishikawa, ``Fixed points by a new iteration method,'' \emph{Proceedings of
  the American Mathematical Society}, vol.~44, no.~1, pp. 147--150, 1974.

\bibitem{hicks1977mann}
T.~L. Hicks and J.~D. Kubicek, ``On the mann iteration process in a hilbert
  space,'' 1977.

\bibitem{weng1991fixed}
X.~Weng, ``Fixed point iteration for local strictly pseudo-contractive
  mapping,'' \emph{Proceedings of the American Mathematical Society}, vol. 113,
  no.~3, pp. 727--731, 1991.

\bibitem{rafiq2007mann}
A.~Rafiq, ``On mann iteration in hilbert spaces,'' \emph{Nonlinear Analysis:
  Theory, Methods \& Applications}, vol.~66, no.~10, pp. 2230--2236, 2007.

\bibitem{harte1972spectral}
R.~Harte, ``Spectral mapping theorems,'' in \emph{Proceedings of the Royal
  Irish Academy. Section A: Mathematical and Physical Sciences}.\hskip 1em plus
  0.5em minus 0.4em\relax JSTOR, 1972, pp. 89--107.

\bibitem{haase2005spectral}
M.~Haase, ``Spectral mapping theorems for holomorphic functional calculi,''
  \emph{Journal of the London Mathematical Society}, vol.~71, no.~3, pp.
  723--739, 2005.

\bibitem{paszke2017automatic}
A.~Paszke, S.~Gross, S.~Chintala, G.~Chanan, E.~Yang, Z.~DeVito, Z.~Lin,
  A.~Desmaison, L.~Antiga, and A.~Lerer, ``Automatic differentiation in
  pytorch,'' 2017.

\bibitem{golub2013matrix}
G.~H. Golub and C.~F. Van~Loan, \emph{Matrix computations}.\hskip 1em plus
  0.5em minus 0.4em\relax JHU press, 2013.

\bibitem{giselsson2017tight}
P.~Giselsson, ``Tight global linear convergence rate bounds for
  douglas--rachford splitting,'' \emph{Journal of Fixed Point Theory and
  Applications}, vol.~19, no.~4, pp. 2241--2270, 2017.

\bibitem{pesquet2021learning}
J.-C. Pesquet, A.~Repetti, M.~Terris, and Y.~Wiaux, ``Learning maximally
  monotone operators for image recovery,'' \emph{SIAM Journal on Imaging
  Sciences}, vol.~14, no.~3, pp. 1206--1237, 2021.

\bibitem{buzzard2018plug}
G.~T. Buzzard, S.~H. Chan, S.~Sreehari, and C.~A. Bouman, ``Plug-and-play
  unplugged: Optimization-free reconstruction using consensus equilibrium,''
  \emph{SIAM Journal on Imaging Sciences}, vol.~11, no.~3, 2018.

\bibitem{cohen2021regularization}
R.~Cohen, M.~Elad, and P.~Milanfar, ``Regularization by denoising via
  fixed-point projection (red-pro),'' \emph{SIAM Journal on Imaging Sciences},
  vol.~14, no.~3, pp. 1374--1406, 2021.

\bibitem{laumont2023maximum}
R.~Laumont, V.~De~Bortoli, A.~Almansa, J.~Delon, A.~Durmus, and M.~Pereyra,
  ``On maximum a posteriori estimation with plug \& play priors and stochastic
  gradient descent,'' \emph{Journal of Mathematical Imaging and Vision},
  vol.~65, no.~1, pp. 140--163, 2023.

\bibitem{sun2019online}
Y.~Sun, B.~Wohlberg, and U.~S. Kamilov, ``An online plug-and-play algorithm for
  regularized image reconstruction,'' \emph{IEEE Transactions on Computational
  Imaging}, vol.~5, no.~3, pp. 395--408, 2019.

\bibitem{zhu2023denoising}
Y.~Zhu, K.~Zhang, J.~Liang, J.~Cao, B.~Wen, R.~Timofte, and L.~Van~Gool,
  ``Denoising diffusion models for plug-and-play image restoration,'' in
  \emph{Proceedings of the IEEE/CVF Conference on Computer Vision and Pattern
  Recognition}, 2023, pp. 1219--1229.

\bibitem{le2023preconditioned}
M.~Le~Pendu and C.~Guillemot, ``Preconditioned plug-and-play admm with locally
  adjustable denoiser for image restoration,'' \emph{SIAM Journal on Imaging
  Sciences}, vol.~16, no.~1, pp. 393--422, 2023.

\bibitem{peng2023denoising}
M.~Peng, R.~Kitichotkul, S.~W. Seidel, C.~Yu, and V.~K. Goyal, ``Denoising
  particle beam micrographs with plug-and-play methods,'' \emph{IEEE
  Transactions on Computational Imaging}, 2023.

\bibitem{heide2014flexisp}
F.~Heide, M.~Steinberger, Y.-T. Tsai, M.~Rouf, D.~Pajak, D.~Reddy,
  O.~Gallo, J.~Liu, W.~Heidrich, K.~Egiazarian \emph{et~al.}, ``Flexisp: A
  flexible camera image processing framework,'' \emph{ACM Transactions on
  Graphics (ToG)}, vol.~33, no.~6, pp. 1--13, 2014.

\bibitem{kindermann2005deblurring}
S.~Kindermann, S.~Osher, and P.~W. Jones, ``Deblurring and denoising of images
  by nonlocal functionals,'' \emph{Multiscale Modeling \& Simulation}, vol.~4,
  no.~4, pp. 1091--1115, 2005.

\bibitem{bredies2010total}
K.~Bredies, K.~Kunisch, and T.~Pock, ``Total generalized variation,''
  \emph{SIAM Journal on Imaging Sciences}, vol.~3, no.~3, pp. 492--526, 2010.

\bibitem{romano2017little}
Y.~Romano, M.~Elad, and P.~Milanfar, ``The little engine that could:
  Regularization by denoising (red),'' \emph{SIAM Journal on Imaging Sciences},
  vol.~10, no.~4, pp. 1804--1844, 2017.

\bibitem{reehorst2018regularization}
E.~T. Reehorst and P.~Schniter, ``Regularization by denoising: Clarifications
  and new interpretations,'' \emph{IEEE transactions on computational imaging},
  vol.~5, no.~1, pp. 52--67, 2018.

\bibitem{attouch2010proximal}
H.~Attouch, J.~Bolte, P.~Redont, and A.~Soubeyran, ``Proximal alternating
  minimization and projection methods for nonconvex problems: An approach based
  on the kurdyka-{\l}ojasiewicz inequality,'' \emph{Mathematics of operations
  research}, vol.~35, no.~2, pp. 438--457, 2010.

\bibitem{zhang2021plug}
K.~Zhang, Y.~Li, W.~Zuo, L.~Zhang, L.~Van~Gool, and R.~Timofte, ``Plug-and-play
  image restoration with deep denoiser prior,'' \emph{IEEE Transactions on
  Pattern Analysis and Machine Intelligence}, vol.~44, no.~10, pp. 6360--6376,
  2021.

\bibitem{buades2005review}
A.~Buades, B.~Coll, and J.-M. Morel, ``A review of image denoising algorithms,
  with a new one,'' \emph{Multiscale modeling \& simulation}, vol.~4, no.~2,
  pp. 490--530, 2005.

\bibitem{ronneberger2015u}
O.~Ronneberger, P.~Fischer, and T.~Brox, ``U-net: Convolutional networks for
  biomedical image segmentation,'' in \emph{Medical Image Computing and
  Computer-Assisted Intervention--MICCAI 2015: 18th International Conference,
  Munich, Germany, October 5-9, 2015, Proceedings, Part III 18}.\hskip 1em plus
  0.5em minus 0.4em\relax Springer, 2015, pp. 234--241.

\bibitem{dabov2006image}
K.~Dabov, A.~Foi, V.~Katkovnik, and K.~Egiazarian, ``Image denoising with
  block-matching and 3d filtering,'' in \emph{Image processing: algorithms and
  systems, neural networks, and machine learning}, vol. 6064.\hskip 1em plus
  0.5em minus 0.4em\relax SPIE, 2006, pp. 354--365.

\bibitem{groupsparsity}
P.~J. e.~a. Mairal~J, Bach~F, ``Non-local sparse models for image
  restoration,'' \emph{IEEE 12th International Conference on Computer Vision},
  pp. 2272--2279, 2009.

\bibitem{venkatakrishnan2013plug}
S.~V. Venkatakrishnan, C.~A. Bouman, and B.~Wohlberg, ``Plug-and-play priors
  for model based reconstruction,'' in \emph{2013 IEEE global conference on
  signal and information processing}.\hskip 1em plus 0.5em minus 0.4em\relax
  IEEE, 2013, pp. 945--948.

\bibitem{zhang2017beyond}
K.~Zhang, W.~Zuo, Y.~Chen, D.~Meng, and L.~Zhang, ``Beyond a gaussian denoiser:
  Residual learning of deep cnn for image denoising,'' \emph{IEEE transactions
  on image processing}, vol.~26, no.~7, pp. 3142--3155, 2017.

\bibitem{chan2016plug}
S.~H. Chan, X.~Wang, and O.~A. Elgendy, ``Plug-and-play admm for image
  restoration: Fixed-point convergence and applications,'' \emph{IEEE
  Transactions on Computational Imaging}, vol.~3, no.~1, pp. 84--98, 2016.

\bibitem{ryu2019plug}
E.~Ryu, J.~Liu, S.~Wang, X.~Chen, Z.~Wang, and W.~Yin, ``Plug-and-play methods
  provably converge with properly trained denoisers,'' in \emph{International
  Conference on Machine Learning}.\hskip 1em plus 0.5em minus 0.4em\relax PMLR,
  2019, pp. 5546--5557.

\bibitem{boyd2011distributed}
S.~Boyd, N.~Parikh, E.~Chu, B.~Peleato, J.~Eckstein \emph{et~al.},
  ``Distributed optimization and statistical learning via the alternating
  direction method of multipliers,'' \emph{Foundations and
  Trends{\textregistered} in Machine learning}, vol.~3, no.~1, pp. 1--122,
  2011.

\bibitem{gu2014weighted}
S.~Gu, L.~Zhang, W.~Zuo, and X.~Feng, ``Weighted nuclear norm minimization with
  application to image denoising,'' in \emph{Proceedings of the IEEE conference
  on computer vision and pattern recognition}, 2014, pp. 2862--2869.

\bibitem{rudin1992nonlinear}
L.~I. Rudin, S.~Osher, and E.~Fatemi, ``Nonlinear total variation based noise
  removal algorithms,'' \emph{Physica D: nonlinear phenomena}, vol.~60, no.
  1-4, pp. 259--268, 1992.

\bibitem{Ignatov_2018_ECCV_Workshops}
A.~Ignatov, R.~Timofte \emph{et~al.}, ``Pirm challenge on perceptual image
  enhancement on smartphones: report,'' in \emph{European Conference on
  Computer Vision (ECCV) Workshops}, January 2019.

\bibitem{cohen2021has}
R.~Cohen, Y.~Blau, D.~Freedman, and E.~Rivlin, ``It has potential:
  Gradient-driven denoisers for convergent solutions to inverse problems,''
  \emph{Advances in Neural Information Processing Systems}, vol.~34, pp.
  18\,152--18\,164, 2021.

\bibitem{hurault2022gradient}
S.~Hurault, A.~Leclaire, and N.~Papadakis, ``Gradient step denoiser for
  convergent plug-and-play,'' in \emph{International Conference on Learning
  Representations (ICLR'22)}, 2022.

\bibitem{hurault2022proximal}
------, ``Proximal denoiser for convergent plug-and-play optimization with
  nonconvex regularization,'' in \emph{International Conference on Machine
  Learning}.\hskip 1em plus 0.5em minus 0.4em\relax PMLR, 2022, pp. 9483--9505.

\bibitem{laroche2023provably}
C.~Laroche, A.~Almansa, E.~Coupet{\'e}, and M.~Tassano, ``Provably convergent
  plug \& play linearized admm, applied to deblurring spatially varying
  kernels,'' in \emph{ICASSP 2023-2023 IEEE International Conference on
  Acoustics, Speech and Signal Processing (ICASSP)}.\hskip 1em plus 0.5em minus
  0.4em\relax IEEE, 2023, pp. 1--5.

\bibitem{laroche2023deep}
C.~Laroche, A.~Almansa, and M.~Tassano, ``Deep model-based super-resolution
  with non-uniform blur,'' in \emph{Proceedings of the IEEE/CVF Winter
  Conference on Applications of Computer Vision}, 2023, pp. 1797--1808.

\end{thebibliography}
\end{document}